\documentclass{article} % For LaTeX2e
\usepackage{iclr2026_conference,times}

\usepackage{amsmath,amsfonts,bm}

\def\eqref#1{equation~\ref{#1}}
\def\1{\bm{1}}

\DeclareMathAlphabet{\mathsfit}{\encodingdefault}{\sfdefault}{m}{sl}
\SetMathAlphabet{\mathsfit}{bold}{\encodingdefault}{\sfdefault}{bx}{n}



\usepackage{microtype}
\usepackage{graphicx}
\usepackage{subcaption}
\usepackage{booktabs} % 专业表格
\usepackage{enumitem}

\usepackage[ruled,vlined]{algorithm2e} %linesnumbered,r

\usepackage{array}
\usepackage{textcomp}
\usepackage{stfloats}
\usepackage{url}
\usepackage{verbatim}
\usepackage{balance}
\usepackage{float}
\usepackage{xcolor}
\usepackage{pifont}
\definecolor{commentcolor}{rgb}{0.2, 0.6, 0.2}
\newcommand{\PyComment}[1]{\texttt{\textcolor{commentcolor}{\# #1}}}
\newcommand{\PyCode}[1]{\texttt{\textcolor{black}{#1}}}

\usepackage{hyperref}

\usepackage{amsmath}
\usepackage{amssymb}
\usepackage{mathtools}
\usepackage{amsthm}

\usepackage[capitalize,noabbrev]{cleveref}

\theoremstyle{plain}
\newtheorem{theorem}{Theorem}[section]
\newtheorem{proposition}[theorem]{Proposition}
\newtheorem{lemma}[theorem]{Lemma}

\theoremstyle{definition}
\newtheorem{definition}[theorem]{Definition}

\theoremstyle{remark}

\usepackage[utf8]{inputenc} % allow utf-8 input
\usepackage[T1]{fontenc}    % use 8-bit T1 fonts
\usepackage{hyperref}       % hyperlinks
\usepackage{url}            % simple URL typesetting
\usepackage{booktabs}       % professional-quality tables
\usepackage{amsfonts}       % blackboard math symbols
\usepackage{nicefrac}       % compact symbols for 1/2, etc.
\usepackage{microtype}      % microtypography
\usepackage{xcolor}         % colors

\title{Maximizing Incremental Information Entropy for Contrastive Learning}

\author{
Jiansong Zhang$^{1,3}$,
Zhuoqin Yang$^{1,3}$,
Xu Wu$^{1}$,
Xiaoling Luo$^{1}$,
Peizhong Liu$^{2}$,
Linlin Shen$^{1,3,4}$\thanks{Corresponding author: \texttt{llshen@szu.edu.cn}} \\
$^{1}$ School of Computer Science and Software Engineering, Shenzhen University, Shenzhen, China \\
$^{2}$ School of Engineering, Huaqiao University, Quanzhou, China \\
$^{3}$ Computer Vision Institute, School of Artificial Intelligence, Shenzhen University, Shenzhen, China \\
$^{4}$ Guangdong Provincial Key Laboratory of Intelligent Information Processing, Shenzhen University, \\ \ \ \ Shenzhen, China
}
\iclrfinalcopy
\begin{document}

\maketitle

\begin{abstract}
%Contrastive learning has achieved remarkable success in self-supervised representation learning, often guided by information-theoretic objectives such as mutual information maximization. Motivated by the limitations of static augmentations and rigid invariance constraints, we propose \textbf{IE-CL} (Incremental-Entropy Contrastive Learning), a framework that explicitly optimizes the entropy gain between augmented views while preserving semantic consistency. We theoretically show that minimizing contrastive loss is equivalent to maximizing this incremental entropy under alignment constraints. To implement this objective, we introduce a learnable nonlinear transformation regulated by KL divergence, which adaptively expands the local representation space. Experiments on CIFAR-10/100, STL-10, and ImageNet demonstrate that IE-CL consistently improves performance under small-batch settings. Moreover, our entropy-regularized module can be seamlessly integrated into existing frameworks. This work bridges theoretical principles and practice, offering a new perspective in contrastive learning.
Contrastive learning has achieved remarkable success in self-supervised representation learning, often guided by information-theoretic objectives such as mutual information maximization. Motivated by the limitations of static augmentations and rigid invariance constraints, we propose \textbf{IE-CL} (Incremental-Entropy Contrastive Learning), a framework that explicitly optimizes the entropy gain between augmented views while preserving semantic consistency. Our theoretical framework reframes the challenge by identifying the encoder as an information bottleneck and proposes a joint optimization of two components: a learnable transformation for entropy generation and an encoder regularizer for its preservation. Experiments on CIFAR-10/100, STL-10, and ImageNet demonstrate that IE-CL consistently improves performance under small-batch settings. Moreover, our core modules can be seamlessly integrated into existing frameworks. This work bridges theoretical principles and practice, offering a new perspective in contrastive learning.

\end{abstract}

\section{Introduction}
\label{sec:intro}

Self-supervised contrastive learning has emerged as a cornerstone paradigm for representation learning, enabling models to extract rich semantic features without explicit labels. At its core, contrastive learning constructs a latent space where semantically similar samples converge while dissimilar ones diverge~\cite{wang2020understanding, le2020contrastive}. Despite rapid advances in this field, a fundamental tension persists between theoretical understanding and practical implementation. While recent works have decomposed contrastive objectives into \textit{alignment} and \textit{uniformity} principles~\cite{wang2020understanding, zhang2023matrix}, they offer limited insight into the \textit{dynamic information landscape} that unfolds during the learning process.

Current contrastive frameworks such as SimCLR~\cite{pmlr-v119-chen20j}, MoCo~\cite{he_momentum_2020}, and their variants rely heavily on static, human-engineered augmentations and large batch sampling to enforce invariance and representational diversity. These approaches impose rigid constraints on the learning dynamics: augmentations must balance semantic preservation with transformational complexity, while batch scaling faces inevitable hardware limitations. Despite substantial engineering efforts to refine augmentation strategies~\cite{chen_exploring_2020, chen_empirical_2021, tian_contrastive_2020}, these methods fundamentally lack a principled mechanism for adaptively expanding the representational capacity of each instance while maintaining semantic coherence.

%We address this limitation by reconceptualizing contrastive learning through the lens of \textit{incremental information entropy}, a novel theoretical framework that quantifies the progressive expansion of representation space during learning. Inspired by information-theoretic objectives in mutual information maximization~\cite{hjelm2018learning, bardes2022vicreg}, we develop a sample-wise entropy formulation that captures how much additional uncertainty is gained between augmented views. Our key insight is that optimal contrastive learning should maximize the conditional entropy gain between positive views while preserving their mutual information. \textcolor{black}{Under the assumption of independent samples and fixed data transformations,} we prove through rigorous analysis that minimizing the standard contrastive loss is mathematically equivalent to maximizing this incremental entropy conditioned on alignment constraints. \textcolor{black}{This finding generalizes existing decompositions and highlights the overlooked trade-off between semantic invariance and representational expressivity.} Conventional methods implicitly limit entropy expansion by enforcing overly strict invariance, thereby constraining the model's ability to capture the full semantic richness of data.

We address this limitation by reconceptualizing contrastive learning through the lens of \textit{incremental information entropy}, a novel framework that quantifies the expansion of representation space during learning. Inspired by information-theoretic objectives~\cite{hjelm2018learning, bardes2022vicreg}, we focus on how additional controllable uncertainty is gained between augmented views to strengthen learning. Our key insight is that optimal contrastive learning could maximize the conditional entropy gain between positive views while preserving their mutual information. The effectiveness of maximizing this incremental entropy is contingent on its preservation through the deep encoder, which often acts as an information bottleneck. We therefore propose a framework that jointly optimizes two synergistic components: a learnable transformation for \textit{entropy generation} and an explicit regularization of the encoder for \textit{entropy preservation}. This principled approach highlights the overlooked trade-off between semantic invariance and representational expressivity.

Based on this theoretical insight, we introduce IE-CL (\textbf{I}ncremental-\textbf{E}ntropy \textbf{C}ontrastive \textbf{L}earning), a framework that explicitly optimizes for controlled entropy gain. To achieve this, we design a learnable nonlinear transformation module (SAIB) that adaptively expands each sample’s local representation manifold by guaranteeing a strictly positive Jacobian determinant. Crucially, to ensure this generated entropy is not lost during encoding, this module is paired with an explicit encoder regularization mechanism that encourages information preservation. These components work in concert with a Kullback-Leibler divergence constraint to balance entropic expansion against semantic consistency. IE-CL operates efficiently under small batch sizes (e.g., 256), enabling broader applicability without the hardware burden of large-batch training.

The contributions of this work can be summarised as:
(1) We propose a new theoretical framework for contrastive learning that identifies the deep encoder as an information bottleneck. We posit that effective representation learning requires jointly optimizing for both \textbf{entropy generation} at the input and \textbf{entropy preservation} during encoding.
(2) Based on this framework, we design a novel model, IE-CL, featuring two key components: a learnable transformation (SAIB) to generate rich input-level entropy, and an encoder regularizer (e.g., Spectral Normalisation) to ensure its faithful propagation.
(3) We provide a detailed empirical analysis demonstrating IE-CL's effectiveness, particularly in small-batch settings. We also show that our core module can enhance other self-supervised models in a plug-and-play manner.

Our work bridges the gap between information-theoretic principles and practical contrastive learning, offering a more complete theoretical understanding and algorithmic innovations that significantly advance the field of self-supervised representation learning.

\begin{figure*}
  \centering
  \includegraphics[width=1\textwidth]{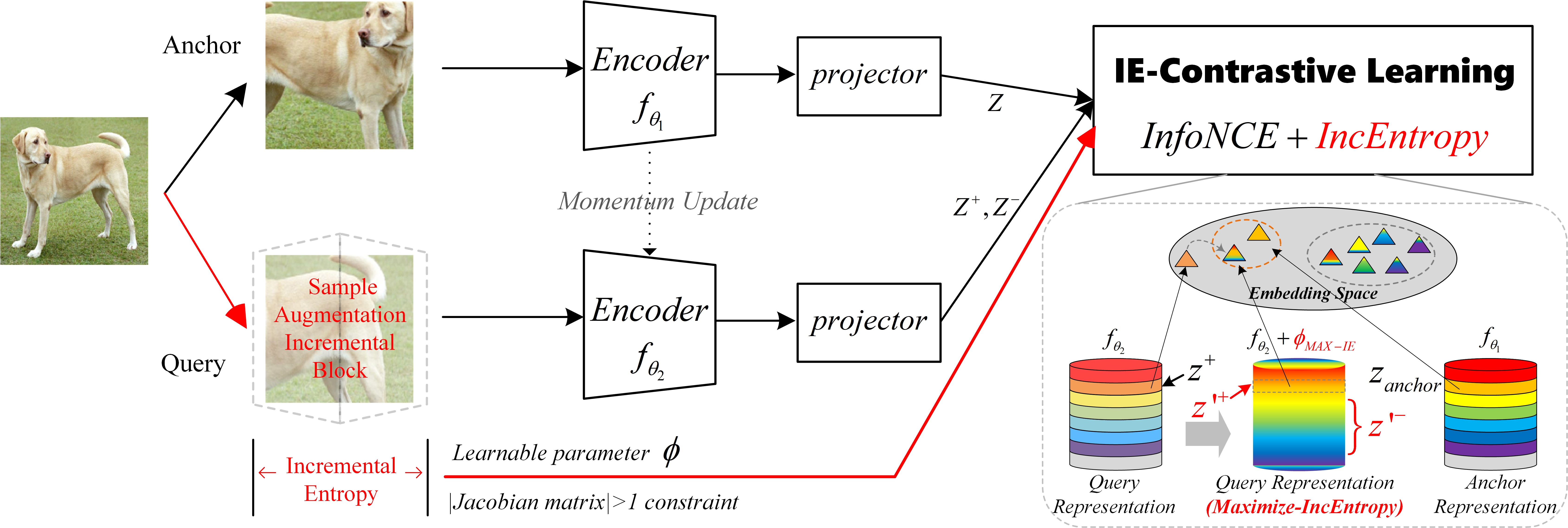}
  \caption{\textcolor{black}{\textbf{Overview of the proposed IE-CL.} We define incremental entropy as the absolute change in entropy induced by classical contrastive augmentations (see Definition \ref{def3.2}). To optimize the contrastive learning process, we propose the \textbf{S}ample \textbf{A}ugmentation \textbf{I}ncremental \textbf{B}lock (\textbf{SAIB}), a learnable module that ensures the local Jacobian determinant > 1. By incorporating sample-level incremental entropy into contrastive optimization, we establish a principled framework that improves the effectiveness of self-supervised representation learning.}}
 \label{exfig1}
\end{figure*}

\section{Related Work}
%\vspace{-1mm}
\label{sec:related}

\paragraph{Self-supervised Paradigm}
Self-supervised learning has emerged as a prominent paradigm for feature extraction without reliance on labeled data\cite{9395172,9830620,9772739}. A central research focus has been the development of effective encoding frameworks that facilitate rich representation learning in the absence of supervision.
Notable approaches include contrastive learning\cite{chen_big_2020,chen_improved_2020,chen_exploring_2020,chen_empirical_2021,caron2021emerging,oquab_dinov2_2023,wu_extreme_2023}, masked autoencoders\cite{zhou_mimco_2022,xie_simmim_2022,wei_masked_2022,chen_context_2022}, and advances in loss function design\cite{ermolov_whitening_2021,zbontar_barlow_2021,tian_what_2020,ozsoy_self-supervised_2022,bardes2022vicreg}.
Among these, contrastive learning has become a dominant paradigm due to its ability to extract rich features through well-designed pretext tasks within a dual-encoder framework\cite{garrido_duality_2022}.It has frequently served as a benchmark for evaluating self-supervised learning methods. Recently, the emergence of masked pretext tasks has opened new avenues for learning representations in a label-free setting. Works such as \cite{he_masked_2021} and \cite{bao_beit_2022} creatively adapted masking strategies from NLP to vision, enabling image reconstruction from masked tokens using spatial priors and positional embeddings. Following this, Jinghao Zhou et.al\cite{zhou_ibot_2022} further abstracted feature representations in image self-supervised learning using a knowledge distillation-based masking learning strategy, also demonstrating the effectiveness of masking strategies in dual-track self-supervised frameworks like contrastive learning. Concurrently, the realm of non-masking pretext tasks\cite{mo_multi-level_2023,huang_revisiting_2022,oinar_expectation-maximization_2023} in self-supervised learning has witnessed numerous novel contributions. Notably, Tong et.al\cite{tong_emp-ssl_2023} employed an extremely high number of patches as a self-supervised signal, proposing a self-supervised learning framework requiring only one epoch. The remarkable success of these works is largely attributable to researchers’ deepening understanding of data processing methods in self-supervised learning.

\paragraph{Contrastive Learning Theory}

The empirical success of contrastive learning has spurred extensive theoretical investigations. Early work focused on analyzing the mathematical foundations of contrastive loss. Saunshi et al.~\cite{saunshi2019theoretical} were among the first to show that contrastive learning can produce linearly separable representations under certain conditions. Wang and Isola~\cite{wang2020understanding} decomposed the InfoNCE loss into two interpretable terms—alignment and uniformity—where alignment promotes similarity between positive pairs and uniformity mitigates feature collapse. This formulation offered a unified lens for understanding contrastive learning and inspired connections to broader information-theoretic frameworks, such as mutual information maximization~\cite{tian2022deep} and noise contrastive estimation~\cite{hu2022your}. From a spectral graph theory viewpoint, Chen et al.~\cite{haochen2021provable} revealed that contrastive learning implicitly learns the Laplacian of the data graph, showing equivalence to spectral clustering objectives. This was later extended to dynamic graphs~\cite{shen2022connect} and connected to kernel methods~\cite{wang2022chaos}. Tan et al.~\cite{tan2023contrastive} introduced $\alpha$-order mutual information to unify contrastive and non-contrastive losses, bridging matrix-based contrastive methods (e.g., Barlow Twins~\cite{zbontar_barlow_2021}, VICReg~\cite{bardes2022vicreg}) with classical dimensionality reduction techniques such as ISOMAP. Beyond spectral perspectives, Zimmermann et al.~\cite{zimmermann2021contrastive} proposed a probabilistic interpretation, viewing contrastive learning as reverse engineering the data generation process under the assumption of a uniform latent prior. This aligns with the framework of noise contrastive estimation~\cite{gutmann2010noise} and sheds light on its generalization behavior. Lee et al.~\cite{lee2021predicting} further established a link between contrastive loss and the variational lower bound of the data likelihood using latent variable models. As non-contrastive approaches such as BYOL~\cite{grill_bootstrap_2020} and Barlow Twins~\cite{zbontar_barlow_2021} gained popularity, recent efforts have focused on theoretically characterizing the distinction between contrastive and non-contrastive paradigms~\cite{zhang2023matrix}.

%\vspace{-2mm}
\section{Method}
%\vspace{-2mm}
\label{section3}
%In this section, we first prove the equivalence between maximizing mutual information and minimizing the contrastive learning objective function. Then, we introduce the concept of incremental information entropy and point out that maximizing incremental information entropy in a nonlinear form is a beneficial improvement in the contrastive learning process. Finally, we propose an implementation of self-supervised contrastive learning that maximizes incremental information entropy, using a fusion module and optimization method.
%\vspace{-2mm}
\subsection{Information Entropy in Contrastive Learning}
%\vspace{-1mm}
\paragraph{Contrastive Learning Objectives}

The primary goal of contrastive learning is to optimize the similarity between positive pairs (anchor and query) while repelling negative samples, thereby enabling effective self-supervised representation learning under the assumption of independently and identically distributed (i.i.d.) samples within a mini-batch. For a given batch of original images $B = \{x_i \mid i = 1, 2, \ldots, N\}$, the representation $z_i \in Z_1$ denotes the embedding of image $x_i$, computed via the encoder $f_{\theta_1}$. This embedding typically originates from the anchor branch in a dual-stream contrastive architecture. The representation $z_i^+$ denotes the positive sample, whereas $z_j \in Z_2 $(with $j \ne i$) corresponds to negative samples derived from different instances in the batch. These negative and positive representations are encoded by the second branch, $f_{\theta_2}$, and are collectively referred to as the query set. The standard form of the objective can be expressed as:
\begin{equation}
L(\mathbf{Z}_1, \mathbf{Z}_2) = - \frac{1}{N} \sum_{i=1}^{N} \log \frac{\exp(\text{sim}(z_i, z_i^+)/\tau)}{\sum_{j=1}^{N} \exp(\text{sim}(z_i, z_j)/\tau)}
\end{equation}
where $\text{sim}(z_i, z_j)$ is the similarity function, and $\tau$ is the temperature parameter that controls the sharpness of the probability distribution. Cosine similarity is often employed, defined as:
\begin{equation}
\text{sim}(z_i, z_j) = \frac{z_i^\top z_j}{\|z_i\| \|z_j\|}
\end{equation}
The optimization objective seeks to minimize the distance between each anchor and its positive counterpart ($z_i$, $z_i^+$), while maximizing the separation from all negative samples $z_j \ne z_i^+$, thereby facilitating effective self-supervised learning.
\paragraph{Mutual Information Theory}
Mutual information provides a principled framework for analyzing self-supervised learning objectives, as discrete probability distributions can be interpreted as samples drawn from an underlying continuous distribution.

\begin{lemma}[Equivalence between InfoNCE minimization and mutual information maximization]
\label{lemma3.1}
Let $Z = f_\theta(X)$ be the embedding of input $X$ and $Z^+$ the corresponding positive sample. Then, based on the Donsker--Varadhan representation, the mutual information satisfies
\[
\min L_{\text{InfoNCE}} \;\Longleftrightarrow\; \max I(Z; Z^+), 
\quad I(Z; Z^+) \geq \log N - L_{\text{InfoNCE}}.
\]
\end{lemma}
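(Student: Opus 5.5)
We follow the standard argument of van den Oord et al.\ (CPC), phrased through a variational lower bound on mutual information. The plan is first to make the statement precise. Fix an anchor $z_i$ and a candidate set $\{z_1,\dots,z_N\}$ containing one positive $z_i^+$ drawn from $p(z^+\mid z_i)$ and $N-1$ negatives drawn i.i.d.\ from the marginal $p(z^+)$. We interpret $\exp(\text{sim}(z,z')/\tau)$ as a critic $g(z,z')$, and read ``$\Longleftrightarrow$'' as the statement that $\log N - L_{\text{InfoNCE}}$ is a lower bound on $I(Z;Z^+)$. This bound is tight at the optimal critic, so decreasing the loss raises a certified lower bound on the mutual information. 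That reading is all the equivalence can honestly mean, and we state it so.

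\textbf{Step 1 (Bayes-optimal critic).} We identify the optimal critic. Viewing the softmax in $L$ as the posterior probability that index $i$ is the positive, the population loss is a cross-entropy. Its minimizer over all critics satisfies $g^*(z,z')\propto p(z'\mid z)/p(z')$, by a direct computation of which index is the positive given the candidate set.

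\textbf{Step 2 (Bound for the optimal critic).} We substitute $g^*$ into the loss and get
\[
L^* = \mathbb{E}\Bigl[\log\Bigl(1+\tfrac{p(z^+)}{p(z^+\mid z)}\textstyle\sum_{j\neq i}\tfrac{p(z_j\mid z)}{p(z_j)}\Bigr)\Bigr].
\]
We then approximate the inner sum by $(N-1)$ times its expectation, which equals $1$ under the marginal. This gives
\[
L^*\approx \mathbb{E}\log\Bigl(1+(N-1)\tfrac{p(z^+)}{p(z^+\mid z)}\Bigr)\ \ge\ \log N - I(Z;Z^+),
\]
where the last inequality uses $1+(N-1)r\ge N r$ for $r\le 1$, or simply drops the $1$. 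Because $L\ge L^*$ for any critic, the general bound follows.

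\textbf{Step 3 (Rigorous route via the Donsker--Varadhan representation).} To avoid the approximation in Step 2, we use
\[
I(Z;Z^+)=\sup_T\ \mathbb{E}_{p(z,z^+)}[T]-\log\mathbb{E}_{p(z)p(z^+)}[e^T].
\]
From it we derive the multi-sample bound $I\ge \mathbb{E}\bigl[\log \tfrac{g(z,z^+)}{\frac1N\sum_j g(z,z_j)}\bigr]$, which is exactly $\log N - L_{\text{InfoNCE}}$. The derivation applies the DV bound on the product space of $N$ samples, using the critic $T=\log\bigl(g(z,z_1)/\tfrac1N\sum_j g(z,z_j)\bigr)$. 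The key fact is that the partition term has expectation at most $1$, by symmetry across the $N$ exchangeable indices. Finally we use the chain rule together with the independence of the negatives, so that the mutual information on the product space equals $I(Z;Z^+)$.

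\textbf{Main obstacle.} The delicate step is the symmetry argument in Step 3. It shows that $\mathbb{E}_{\text{product}}\bigl[e^{T}\bigr]\le 1$, equivalently that the normalized softmax ratio averages to $1$ when the positive is resampled independently. I would handle it by averaging over which index is labeled positive. A secondary issue is that the cosine critic need not reach $g^*$, so the equivalence holds only in the sense of a bound that is tight at the optimum. I would state this caveat explicitly rather than claim a literal equivalence of optimizers.
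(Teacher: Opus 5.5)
You use the same tool as the paper, the Donsker--Varadhan representation, but you instantiate it differently, and the difference matters. The paper works on the pair $(Z,Z^+)$ with the raw critic $T=\text{sim}(z,z^+)/\tau$ and asserts the $\log N$ bound. That choice only gives $I(Z;Z^+)\ge \mathbb{E}_{p(z,z^+)}[\text{sim}/\tau]-\log\mathbb{E}_{p(z)p(z^+)}[e^{\text{sim}/\tau}]$. This bound normalizes globally rather than per anchor over a batch, contains no $N$, and does not by itself imply $\log N-L_{\text{InfoNCE}}\le I$. Indeed, as $N\to\infty$, $\log N-L_{\text{InfoNCE}}$ tends to $\mathbb{E}[\text{sim}/\tau]-\mathbb{E}_z\log\mathbb{E}_{z'}[e^{\text{sim}(z,z')/\tau}]$, which by Jensen is at least, and typically strictly above, the DV value for the same critic.

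Your Step 3 instead does three things: it lifts DV to $(Z,Z_1,\dots,Z_N)$ with the self-normalized critic, removes the partition term by exchangeability, and uses independence of the negatives to get $I(Z;Z_{1:N})=I(Z;Z^+)$. The paper's one-line critic choice skips exactly this argument, and it is what actually yields the finite-$N$ inequality in the statement. The paper's version buys brevity. Yours buys a valid derivation and makes explicit the assumption it needs: negatives i.i.d.\ from the marginal of $Z^+$, independent of $(Z,Z^+)$. Your reading of ``$\Longleftrightarrow$'' as ``the loss certifies a lower bound'' is also more defensible than the paper's claimed equivalence of optimizers.

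Three corrections, none of which affects the validity of Step 3.

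First, Step 2 is not a proof and should be dropped or labeled as CPC-style motivation. Besides the ``$\approx$'', the inequality $1+(N-1)r\ge Nr$ needs $r=p(z^+)/p(z^+\mid z)\le 1$, which fails in general. Dropping the $1$ only gives $\log(N-1)$.

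Second, under the product measure the symmetry argument gives $\mathbb{E}[e^T]=1$ exactly, not merely $\le 1$, since the $N$ ratios are identically distributed and sum to $N$.

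Third, your claim that the bound is tight at the optimal critic holds only as $N\to\infty$. With $g^*=p(z'\mid z)/p(z')=:r(z,z')$, the gap is $\mathbb{E}\log\bigl(\tfrac1N\sum_j r(z,z_j)\bigr)$. This is strictly positive unless $I(Z;Z^+)=0$, and $\log N-L_{\text{InfoNCE}}\le\log N$ for every critic.
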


\begin{proof}
The InfoNCE loss for a positive pair $(z, z^+)$ can be written as
\begin{equation}
L = - \mathbb{E}_{p(z, z^+)} \left[
\log \frac{\exp (\text{sim}(z, z^+)/\tau)}
{\exp (\text{sim}(z, z^+)/\tau) + \sum_{j=1}^{N-1} \exp (\text{sim}(z, z_j^-)/\tau)}
\right].
\end{equation}
Using the Donsker--Varadhan representation,
\begin{equation}
I(Z; Z^+) = \sup_{T} \; \mathbb{E}_{p(z, z^+)}[T(z,z^+)] 
- \log \mathbb{E}_{p(z)p(z^+)}[e^{T(z,z^+)}].
\end{equation}
Choosing $T(z, z^+) = \text{sim}(z,z^+)/\tau$ yields the lower bound
\begin{equation}
I(Z; Z^+) \geq \log N - L_{\text{InfoNCE}}.
\end{equation}
Thus, minimizing $L_{\text{InfoNCE}}$ is equivalent to maximizing $I(Z; Z^+)$.
\renewcommand{\qedsymbol}{}
\end{proof}
\subsection{Incremental Entropy in Contrastive Learning}

%It is evident that optimizing the distributions of $Z_1$ and $Z_2$ fundamentally depends on obtaining effective and discriminative feature representations. From an information-theoretic standpoint—abstracting away encoder-specific inductive biases—the learning objective can be intuitively framed as minimizing the conditional entropy $H(Z^+ \mid Z)$ while maximizing the marginal entropy $H(Z^+)$.
% =================================================================
It is evident that optimizing the distributions of $Z_{1}$ and $Z_{2}$ fundamentally depends on obtaining effective and discriminative feature representations. From an information-theoretic standpoint---abstracting away encoder-specific inductive biases, the learning objective can be intuitively framed as minimizing the conditional entropy $H(Z^{+}|Z)$ while maximizing the marginal entropy $H(Z^{+})$. The incremental entropy is thus defined first from the input side.

% =================================================================

\begin{definition}[{Based on the concept of Shannon Entropy, the change in information entropy of a given sample $X$ after a transformation $g$ is applied, resulting in $X'$, is referred to as the \textbf{Incremental Information Entropy}}]
\label{def3.2}
\[
\Delta H(X) = H(X') - H(X),\\ 
\\ \quad H(X) = - \sum_{i} p(x_i) \log p(x_i)
\]
\end{definition}
\vspace{-3mm}
The relationship between a transformation and the change in entropy can be precisely quantified. For a linear transformation $g$ represented by a matrix $A$, the incremental information entropy is given by:
\vspace{2mm}
\[
    \Delta H(X) = H(g(X)) - H(X) = \log|\det A|
\]

\begin{proof}
When the transformation $g$ is a linear function, the probability density function of $x$ can be written as:
\begin{equation}
p_X'(x') = p_X\left(A^{-1}(x' - b)\right) \cdot \frac{1}{|\det A|}
\end{equation}
\vspace{-1mm}
Replacing $p_X'(x')$ with $H(X')$:
\begin{equation}
\vspace{-1mm}
\small
\begin{split}
H(X') = - \int p_X'(x') \log p_X'(x') \, dx' \\
 = - \int p_X(A^{-1}(x' - b)) \cdot \frac{1}{|\det A|} \log \left( p_X(A^{-1}(x' - b)) \cdot \frac{1}{|\det A|} \right) dx'
\end{split}
\end{equation}
%\begin{equation}
%\begin{split}
% = - \int p_X(A^{-1}(x' - b)) \cdot \frac{1}{|\det A|} \log \left( p_X(A^{-1}(x' - b)) \cdot \frac{1}{|\det A|} \right) dx'
%\end{split}
%\end{equation}
Logarithmic term expansion:
\begin{equation}
\vspace{-1mm}
\small
\begin{split}
H(X') = - \int p_X(A^{-1}(x' - b))\\
 \cdot \frac{1}{|\det A|} \left[ \log p_X(A^{-1}(x' - b)) - \log |\det A| \right] dx'
 \end{split}
\end{equation}
Split into two parts:
\vspace{-2mm}
\begin{equation}
\vspace{-2mm}
\small
\begin{split}
H(X') = - \int p_X(A^{-1}(x' - b))\\
 \cdot \frac{1}{|\det A|} \log p_X(A^{-1}(x' - b)) \, dx' + \log |\det A|
\end{split}
\end{equation}
Perform a permutation on the variable $u = A^{-1}(x' - b)$ with $dx' = |\det A| du$:
\begin{equation}
H(X') = - \int p_X(u) \log p_X(u) \, du + \log |\det A|
\end{equation}
\vspace{-1mm}To wit:
\begin{equation}
H(X') = H(X) + \log |\det A|
\end{equation}
Incremental information entropy is:
\begin{equation}
\Delta H(X) = H(X') - H(X) = \log |\det A|
\end{equation}
\renewcommand{\qedsymbol}{}
\end{proof}
\vspace{-7mm}
This relationship makes it clear why standard augmentations have limitations. When the transformation $g$ is a linear isometry (such as rotation, cropping, mirroring, etc.), its matrix representation $A$ has a determinant $|\det A|=1$, which leads to $\Delta H=0$. In such cases, these augmentations can enrich sample diversity at the batch-level without altering the instance-level entropy. 

However, a critical challenge arises from the nature of deep encoders themselves. In information theory, the Data-Processing Inequality states that post-processing cannot increase information. For differential entropy, this implies that the entropy of a variable's representation $Z=f(X)$ is bounded by the entropy of the original variable $X$. Specifically, for a deterministic function $f$, the change in entropy is governed by:
\begin{equation}
    H(f(X)) \le H(X) + \mathbb{E}_{p(x)}[\log |\det J_f(x)|]
    \label{fencoder}
\end{equation}
where $J_f(x)$ is the Jacobian of the transformation $f$ at $x$. This inequality highlights a crucial issue in representation learning: a deep encoder, acting as the function $f$, can potentially become an information bottleneck, diminishing the entropy of its input. Any diversity generated at the input level is not guaranteed to be preserved in the final representation space. To address this, we introduce the IE-CL framework, a holistic approach that pairs an entropy generation module with an entropy-preserving encoder. We formalize this approach in the following proposition.

\begin{proposition}[Principle of Constrained Incremental Entropy Maximization]
\label{prop:ie-cl_framework}
Let $X^{-}$ be a negative sample, $g_{\phi}$ be a non-linear transformation, and $Z'^{-} = f_{\theta}(g_{\phi}(X^{-}))$ be the final representation encoded by an encoder $f_{\theta}$. To robustly increase the representation entropy $H(Z'^{-})$, maximizing the input-level incremental entropy $\Delta H(X^{-})$ alone is insufficient. A joint condition is required: (1) \textbf{Input Entropy Generation}: The transformation $g_{\phi}$ must be optimized to maximize the incremental entropy $\Delta H(X^{-})$. (2) \textbf{Encoder Entropy Preservation}: The encoder $f_{\theta}$ must be simultaneously constrained to preserve the entropy of its input. Satisfying both conditions provides a principled path toward maximizing the diversity of negative representations for effective contrastive learning.
\end{proposition}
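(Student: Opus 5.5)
The plan is to make the informal proposition precise through a change-of-variables identity for the composition $f_\theta \circ g_\phi$. Write $X' = g_\phi(X^{-})$ and $Z'^{-} = f_\theta(X')$, and assume both maps are locally invertible and differentiable. This holds for $g_\phi$ by construction, since SAIB guarantees a positive Jacobian determinant. Applying the computation in the proof after Definition~\ref{def3.2} pointwise (the linear case generalizes by replacing $|\det A|$ with $|\det J(x)|$) gives the decomposition
\begin{equation*}
H(Z'^{-}) = H(X^{-}) + \underbrace{\mathbb{E}_{p(x)}\bigl[\log|\det J_{g_\phi}(x)|\bigr]}_{\Delta H(X^{-})} + \mathbb{E}_{p(x')}\bigl[\log|\det J_{f_\theta}(x')|\bigr].
\end{equation*}
Here the chain rule $\det J_{f\circ g} = \det J_f \cdot \det J_g$ splits the log-determinant into a generation term and a preservation term. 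For non-invertible $f_\theta$, the identity weakens to the upper bound \eqref{fencoder}.

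\textbf{Necessity of (2).} I would show that maximizing $\Delta H(X^{-})$ alone does not control $H(Z'^{-})$. Take any $g_\phi$ with arbitrarily large $\Delta H(X^{-})$. Choose an encoder that contracts, for example $f_\theta(x) = \epsilon x$, or one that projects onto a lower-dimensional subspace. Then the third term is $d\log\epsilon \to -\infty$ (or the output entropy degenerates), and by \eqref{fencoder} $H(Z'^{-})$ is bounded above by a quantity that can be made arbitrarily small regardless of $g_\phi$.

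\textbf{Necessity of (1).} Symmetrically, if $g_\phi$ is an isometry, then $\Delta H = 0$ as noted after Definition~\ref{def3.2}. The best an entropy-preserving encoder can then achieve is $H(X^{-})$ plus its own log-determinant, so no additional instance-level diversity is produced.

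\textbf{Sufficiency.} I would impose a lower bound on the encoder's singular values, say $\sigma_{\min}(J_{f_\theta}) \ge c$. Then the third term is at least $d\log c$, so
\begin{equation*}
H(Z'^{-}) \ge H(X^{-}) + \Delta H(X^{-}) + d\log c,
\end{equation*}
which makes $H(Z'^{-})$ monotone in $\Delta H(X^{-})$. Jointly optimizing both terms therefore increases the representation entropy. Combined with Lemma~\ref{lemma3.1}, larger negative-sample entropy spreads the $z_j^-$ more uniformly in the denominator of InfoNCE, which supports the final claim about the diversity of negatives.

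The main obstacle is the preservation step. Spectral normalisation caps the \emph{largest} singular value, not the smallest, so it does not directly give the lower bound on $\sigma_{\min}$. Realistic encoders also map to lower dimensions, where differential entropy of the output is ill-defined. I would first restrict to a locally bi-Lipschitz encoder, treating regularization as enforcing near-isometry, and state the result in that regime. For the dimension-reducing case I would fall back to the volume form $\tfrac12\log\det(J J^\top)$ on the image manifold, and acknowledge that the statement is then a principle rather than a sharp theorem.
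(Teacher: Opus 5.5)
Your proposal is correct in outline, and it takes a sharper route than the paper's proof, which has a hole in its sufficiency step that you identify.

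The paper uses only the one-sided bound in equation~\ref{fencoder}, $H(Z'^{-}) \le H(X') + \mathbb{E}[\log|\det J_{f_\theta}|]$. It notes that maximizing $\Delta H(X^{-})$ is the same as maximizing $H(X')$, since $H(X^{-})$ does not depend on $\phi$, and observes that the Jacobian term can be very negative. It then asserts that a Lipschitz constraint keeps that term from being too negative, and concludes ``necessary and sufficient.'' An upper bound can support only the insufficiency half; it can never guarantee that $H(Z'^{-})$ is large. Moreover, as you point out, a Lipschitz constraint caps $\sigma_{\max}(J_{f_\theta})$, so it bounds $\log|\det J_{f_\theta}|$ from \emph{above}; your $f_\theta(x)=\epsilon x$ with $\epsilon\le 1$ is $1$-Lipschitz.

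Your co-Lipschitz floor $\sigma_{\min}(J_{f_\theta})\ge c$, combined with the change-of-variables identity, gives a genuine lower bound. It makes explicit the hypothesis the paper's sufficiency claim silently needs. The cost is that you need injectivity, square Jacobians, and a condition that spectral normalization does not enforce.

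What spectral normalization does deliver fits your framework on the other side. If it makes $f_\theta$ $1$-Lipschitz, then in your square-Jacobian setting $|\det J_{f_\theta}|\le 1$. Equation~\ref{fencoder} then gives $H(Z'^{-}) \le H(X^{-}) + \Delta H(X^{-})$: the encoder cannot manufacture entropy, so condition (1) becomes genuinely necessary. This is a better necessity argument than your isometry remark, which by itself does not rule out an expansive encoder raising $H(Z'^{-})$ on its own. Combined with your floor, you get
$H(X^{-})+\Delta H(X^{-})+d\log c \le H(Z'^{-}) \le H(X^{-})+\Delta H(X^{-})$.
This sandwich is the precise sense in which $H(Z'^{-})$ tracks $\Delta H(X^{-})$; calling $H(Z'^{-})$ ``monotone'' in $\Delta H(X^{-})$ overstates what a one-sided lower bound gives.

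One step needs repair: local invertibility does not give the identity. If $f$ is locally but not globally injective, the pushforward density at $y$ sums over all preimages. One then only gets $H(f(X)) \le H(X) + \mathbb{E}[\log|\det J_f|]$, which is exactly equation~\ref{fencoder}, usually with strict inequality. Your sufficiency bound therefore needs $f_\theta$ to be injective on the support of $X'$. A global lower Lipschitz bound $\|f_\theta(x)-f_\theta(y)\|\ge c\|x-y\|$ gives both injectivity and $\sigma_{\min}\ge c$; ``locally bi-Lipschitz'' gives neither globally.

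Likewise, a positive Jacobian determinant makes SAIB only locally invertible, so identifying $\mathbb{E}[\log|\det J_{g_\phi}|]$ with $\Delta H(X^{-})$ is unjustified. You do not need that identification: Definition~\ref{def3.2} gives $H(X') = H(X^{-}) + \Delta H(X^{-})$ directly, and only the encoder factor has to be expanded.
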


\begin{proof}[Theoretical Argument]
Our argument is based on the Data-Processing Inequality for differential entropy. Let $X' = g_{\phi}(X^{-})$ be the transformed input to the encoder. The entropy of the final representation, $Z'^{-} = f_{\theta}(X')$, is bounded as follows:
\begin{equation}
    H(Z'^{-}) = H(f_{\theta}(X')) \le H(X') + \mathbb{E}_{p(x')}[\log |\det J_{f_{\theta}}(x')|]
\end{equation}
This inequality reveals the core challenge. The first condition, maximizing $\Delta H(X^{-})$, is equivalent to maximizing $H(X')$ since $H(X^{-})$ is a constant with respect to the parameters $\phi$ of $g_{\phi}$. However, even if $H(X')$ is large, the second term, which depends on the Jacobian of the encoder $f_{\theta}$, can be a large negative value, effectively nullifying the gains from the first term. This occurs if the encoder acts as a strong information bottleneck, aggressively compressing its input space.

Therefore, to guarantee that a large $H(X')$ induces a correspondingly large $H(Z^{\prime-})$, 
we introduce the second requirement: constraining the encoder. 
Specifically, by regularizing $f_{\theta}$ to be entropy-preserving 
(e.g., via Lipschitz continuity constraints), 
we effectively bound the term $\mathbb{E}[\log |\det J_{f_{\theta}}|]$, 
thus preventing it from becoming excessively negative. 
This condition ensures that the entropy injected by $g_{\phi}$ 
is faithfully propagated to the final representation space.

Consequently, the joint optimization of an entropy-generating transformation and an entropy-preserving encoder is a necessary and sufficient strategy to robustly increase the final representation entropy $H(Z'^{-})$.
\renewcommand{\qedsymbol}{}
\end{proof}
\subsection{Maximizing Incremental Information Entropy}
Based on the framework established in Proposition~\ref{prop:ie-cl_framework}, our goal is to co-optimize both the generation of incremental entropy and its preservation through the encoder. While encoder regularization is implemented via standard techniques such as spectral normalization, the core of our contribution lies in the design of a learnable, entropy-generating transformation $g_{\phi}$. Isometric transformations, as discussed, cannot linearly provide incremental information entropy. To address this, we propose a nonlinear transformation implemented via batch-level pixel-wise operations, explicitly designed to induce positive entropy increments in the query branch.

%Based on the previous conclusions of the assumption of distribution invariance for anchor samples, using the transformation $g$ can theoretically enhance the incremental information entropy of the samples on the query side, thereby maximizing the marginal entropy $H(Z^+)$ and minimizing the conditional entropy $H(Z^+ \mid Z)$ to maximize the sample mutual information and satisfy the self-supervised convergence condition. However, isometric transformations $g$ cannot linearly provide incremental information for the sample contrastive learning process. To address this limitation, we propose a nonlinear transformation $g$ implemented via batch-level pixel-wise operations, explicitly designed to induce positive entropy increments in the query branch.

% --------------------------------------------------------------------------
%\vspace{-1mm}
\paragraph{Sample Augmentation Incremental Block (SAIB)}
Our objective is to maximize mutual information by minimizing the conditional entropy $H(Z^+ \mid Z)$ on the query side. 
To inject a semantics-preserving but entropy-expansive transform
into the \textbf{query branch}\textcolor{black}{\footnote{%
The query branch corresponds to the lower path in Fig.\,1, 
whose encoder parameters are $f_{\theta_2}$.}}
we introduce the \emph{SAIB} module, which couples ViT-style positional
encoding~\cite{dosovitskiy2020image} with a non-linear residual stack. The input $X\!\in\!\mathbb R^{3\times H\times W}$ is first patchified into
a matrix
\(
  P\!\in\!\mathbb R^{(C\,H/p\,W/p)\times (p^{2})}
\)
(as in ViT, $C\!=\!3$), where the \emph{mini-batch}
occupies the channel dimension.
A sequence of
\(1\times1\!-\!\mathrm{Conv}\rightarrow3\times3\!-\!\mathrm{Conv}
  \rightarrow1\times1\!-\!\mathrm{Conv}\)
layers—with channel
expansion ratio~$2$—is wrapped by two skip connections (see
Appendix Figure \ref{app:saib_detail}).  
\textcolor{black}{Owing to the channel-expanding residual design, the local
Jacobian $A$ of SAIB satisfies $|\det A|>1$ almost everywhere
(Appendix~\ref{app:det-greater-one}), guaranteeing
positive incremental entropy $\Delta H(P)\!>\!0$.}
After the non-linear block we reshape $P'$ back to the spatial layout and
add a troisième skip connection $X'\!=\!X+{\rm reshape}(P')$.

%\vspace{-1pt}
\paragraph{KL regularisation to avoid degenerate $g_\phi$.}
Because $g_\phi$ acts only on the query branch, aggressive entropy
expansion may lead to distributional drift.  
We therefore penalise the
\emph{Kullback–Leibler divergence}
\begin{equation}
D_{\mathrm{KL}}\!\bigl(p_\phi \parallel q\bigr)
      \;=\;
\int p_\phi(\mathbf z)\,
      \log\frac{p_\phi(\mathbf z)}{q(\mathbf z)}\,d\mathbf z,
\end{equation}
where
\(
  p_\phi(\mathbf z)\!=\!p\bigl(Z^{-\prime}\!=\!\mathbf z\bigr)
\)
is the SAIB-transformed query distribution and
\(
  q(\mathbf z)\!=\!p\bigl(Z\!=\!\mathbf z\bigr)
\)
is the anchor distribution.  
Assuming $q$ is Gaussian
with mean $\mu$ and variance $\sigma_0^{2}I$,
\begin{equation}
D_{\mathrm{KL}}\bigl(p_\phi\parallel q\bigr)
      = H\!\bigl(Z^{-\prime}\bigr)
        + \frac{\|\mu_\phi-\mu\|^{2}}{2\sigma_0^{2}}
        + \frac{d}{2}\log\!\bigl(2\pi\sigma_0^{2}\bigr),
\end{equation}
where $\mu_\phi\!=\!\mathbb E[Z^{-\prime}]$ and $d$ is the feature
dimension.

%\vspace{-1pt}
%\paragraph{Overall objective.}
%We \emph{minimise}
%\begin{equation}
%\label{eq:final_obj}
%\boxed{
%\mathcal L_{\text{final}}
%      = L_{\text{InfoNCE}}
%        + \beta\,D_{\mathrm{KL}}\bigl(p_\phi\parallel q\bigr)
%        + \gamma\,R(g_\phi)
%        \;-\;\lambda\,H\!\bigl(Z^{-\prime}\bigr)
%}
%\end{equation}
%with $\lambda,\beta,\gamma>0$.
%The negative sign before $H(Z^{-\prime})$ forces the optimiser to
%\emph{maximise} incremental entropy, consistent with Lemma~3.3.
%The gradient w.r.t.\ the learnable parameters $\phi$ is
%\begin{equation}
%\frac{\partial\mathcal L_{\text{final}}}{\partial\phi}
%  =\frac{\partial L_{\text{InfoNCE}}}{\partial\phi}
%   -\lambda\frac{\partial H(Z^{-\prime})}{\partial\phi}
%   +\beta\frac{\partial D_{\mathrm{KL}}}{\partial\phi}
%   +\gamma\frac{\partial R(g_\phi)}{\partial\phi}.
%\end{equation}
%Here $R(\cdot)$ is an optional weight-decay or spectral norm penalty.
% =================================================================
% 这是修正后的最终版本，确保了与 Proposition 3.3 的逻辑统一
% =================================================================
% =================================================================
% 这是最终精炼版，语言更专业，结构更紧凑
% =================================================================
\paragraph{Overall objective.}
Our final objective function holistically integrates all components of the framework established in Proposition~\ref{prop:ie-cl_framework}. We minimise the combined loss:
\begin{equation}
\label{eq:final_obj}
\boxed{
\mathcal L_{\text{final}}
      = \mathcal L_{\text{InfoNCE}}
      + \beta\,D_{\mathrm{KL}}\bigl(p_\phi\parallel q\bigr)
      - \lambda\,H\!\bigl(Z^{-\prime}\bigr)
      + \eta\,\mathcal{L}_{\text{reg\_encoder}}
      + \gamma\,R(g_\phi)
}
\end{equation}
with $\lambda, \beta, \eta, \gamma > 0$. Here, the InfoNCE loss drives the primary representation learning task, while the KL-divergence term ensures that the transformations induced by SAIB maintain semantic consistency. The negative entropy term, $- \lambda\,H(Z^{-\prime})$, directly optimizes for greater diversity in the representation space, serving as the practical objective for maximizing \emph{incremental} entropy. Crucially, the novel encoder regularizer, $\eta\,\mathcal{L}_{\text{reg\_encoder}}$, operationalizes the entropy preservation principle central to our framework, ensuring that the diversity generated by SAIB is not lost during encoding. The final term, $\gamma\,R(g_\phi)$, is an optional weight-decay penalty on the SAIB module's parameters. This unified objective enables an end-to-end optimization of both entropy generation and preservation, yielding more robust representations.
% =================================================================% =================================================================

\vspace{-5pt}

%\begin{figure}[htbp]
%  \centering
%  \includegraphics[width=0.95\textwidth]{icml_fig2.eps}
% % \fbox{\rule[-.5cm]{0cm}{4cm} \rule[-.5cm]{4cm}{0cm}}
%  \caption{Structure of Sample Augmentation Incremental Block (\textbf{SAIB}). All the convolutions in this block are cascaded with BN layers and Swish\cite{ramachandran2017searching} to achieve nonlinear augmentation capability. Note the SAIB is a three-channel image that is first tokenised in two dimensions, then convolved and non-linearly processed, and finally reconstructed back to the original position through positional coding. }
%\label{figure3}
%\end{figure}

\section{Experiment \& Result}
%In this section, we evaluate the efficacy of the method proposed in this paper on four classic vision datasets: ImageNet-1k\cite{5206848}, STL-10\cite{coates2011analysis}, CIFAR-10\cite{krizhevsky2009learning}, and CIFAR-100\cite{krizhevsky2009learning}. 

%  监督学习基准测试
%\begin{table}[ht]
%\centering
%\caption{Performance comparison of the self-supervised learning method containing batch adaptive fusion technique proposed in this paper with supervised learning in each benchmark dataset. $^{*}$ indicates that we re-trained using 'ResNet-18'.}
%\label{table1}
%\small
%\begin{tabular}{l|l|l|l|l}
%\toprule
%Dataset & Method & Top 1 & Batch & Epoch \\
%\midrule
%CIFAR-10         &   \emph{Supervised}         &    93.15$^{*}$           &    256         &      1000      \\
%                       &     Ours               &      92.12         &    256                &  1000    \\
%CIFAR-100        &   \emph{Supervised}          &      69.94$^{*}$         &          256          &      1000         \\
%   &     Ours               &            66.53   &             256       &     1000 \\
%STL-10    &    \emph{Supervised}           &       76.60$^{*}$        &         256        &   1000         \\
%   &     Ours               &       91.92        &   256                 & 1000    \\
%ImageNet-1k       &   \emph{Supervised}       &       69.76$^{*}$         &           256         &     300          \\
%   &     Ours               &       59.41        &   256                 &   300   \\
%\bottomrule
%\end{tabular}
%\end{table}

\subsection{Experimental Setup}

\paragraph{Implementation Details} We conducted upstream self-supervised learning experiments on CIFAR-10 \cite{krizhevsky2009learning}, CIFAR-100 \cite{krizhevsky2009learning}, STL-10 \cite{coates2011analysis}, and ImageNet \cite{5206848}, followed by downstream evaluation on the PASCAL VOC. To ensure fair comparison, we used ResNet-based encoders across all experiments and fixed the random seed to 42 for reproducibility. For large-scale pretraining, we employed ResNet-50 as the backbone and followed the standard MoCo configuration on ImageNet for a consistent evaluation protocol. For ablation and scalability analysis, we used ResNet-18 with a batch size of 256 across CIFAR-10, CIFAR-100, STL-10, and ImageNet, enabling controlled comparisons under limited capacity settings. This phase primarily showcases the effectiveness of the proposed method under smaller batch settings across varying dataset distributions.

% =================================================================

\begin{table*}[htbp]
\scriptsize
\centering
\caption{The comparison of the proposed method with ResNet-50 as the backbone under different numbers of pre-training iterations. Using \textbf{BOLD} and \underline{Underline} formatting to highlight the best and second results.}
\label{res50}
\begin{tabular}{lcccccc}
\toprule
Method & 100 ep & 200 ep & 400 ep & 800 ep&Batch Size  \\
\midrule
         SimCLR ({\tiny{ICML'20}}) \cite{pmlr-v119-chen20j}  &66.5   &68.3 &       69.8 &71.1 &4096   \\
          SwAV ({\tiny{NeurIPS'20}}) \cite{caron_unsupervised_2020}   &  66.5 &69.1 &70.7&71.0&4096  \\
          MoCo-v2 ({\tiny{CVPR'20}}) \cite{chen_improved_2020}   &67.4     & 69.9     &70.9&71.3 &256      \\
         SimSiam ({\tiny{ICCV'21}})  \cite{chen_exploring_2020}   & 68.1  &  \underline{70.0}         &70.8 &71.7&256   \\
         NNCLR ({\tiny{ICCV'21}}) \cite{dwibedi2021little}   &65.4  &  66.1         &66.8 &68.7&1024  \\
         All4One ({\tiny{ICCV'23}}) \cite{Estepa_2023_ICCV}    &65.4  &  66.0         &66.6 &68.9 &1024\\
           Matrix-SSL ({\tiny{ICML'24}}) \cite{zhang2023matrix}   &\textbf{69.2} &69.9 &\underline{71.1}  &\underline{71.9} &512  \\
        \midrule
	
           Ours    &  \underline{68.3} &   \textbf{70.9}     &\textbf{71.7} & \textbf{73.2}& \textbf{256}     \\           
\bottomrule
\end{tabular}
\end{table*}
\vspace{-2mm}
%We implemented IE-CL on a MoCo with a momentum encoder, and ResNet symmetrically with the linear layer removed was set as the backbone. The output features are 256-dimensional obtained by global average pooling. The symmetric projector contains three layers of MLP-BN structure respectively, the number of mapping nodes in the middle layer is 4096, and the final output dimension is 512. The learnable parameters in the Anchor-side encoder and SAIB are updated at gradient feedback, while the $Query$ encoder is updated as momentum. The pseudo-code of IE-CL is shown in  Appendix-Algorithm \ref{algorithm1}.
%
%For self-supervised pretraining of IE-CL, we use the AdamW optimizer with a batch size of 256, weight decay of 1E-5, and momentum of 0.9. The base learning rate is set to 3.0, with a cosine annealing learning rate scheduler to optimize the network's training process. The momentum weight $m$ for the momentum encoder is set to 0.9, the weight $\lambda$ is set to 0.2, $\beta$ is set to 9E-2 and $\gamma$ is set to 1E-4. For linear evaluation and transfer learning, we use the SGD optimizer with batch sizes of 512 and 64, momentum of 0.9, weight decay of 1E-5, base learning rate of 0.03, and a cosine annealing learning rate scheduler. We train the linear layer for 200 epochs and report the performance of the last epoch.  All training and testing were performed on 8 $\times$ NIVIDA Tesla V100(32G) and based on Torch 1.13 with Python 3.8.

\paragraph{Model Architectures}
IE-CL was implemented on top of the MoCo framework, incorporating a momentum encoder and a symmetric contrastive loss as in SimCLR~\cite{chen_exploring_2020}. A ResNet backbone with the classification head removed was used symmetrically on both anchor and query branches.
 The output features are 256-dimensional, obtained by global average pooling. Each branch uses a symmetric three-layer projector with an MLP-BN architecture. The hidden dimension is set to 4096, and the final projection is 512-dimensional. The anchor encoder and SAIB module are updated via backpropagation, while the query encoder is updated using momentum-based moving averages. The pseudo-code of IE-CL is shown in  Appendix-Algorithm \ref{algorithm1}.
%%\vspace{-5pt}
\paragraph{Optimization and Hyperparameters}
We trained IE-CL using AdamW with a batch size of 256, a base learning rate of 0.3, weight decay of 1e-5, and momentum of 0.9. Learning rates were scheduled via cosine annealing. The momentum coefficient $m$ for the momentum encoder was set to 0.9. The regularization weights for our final objective (Eq.~\ref{eq:final_obj}) were configured as $\lambda=0.2$ for entropy maximization, $\beta=0.09$ for the KL-divergence, and $\gamma=1e-4 $ for the SAIB weight decay. Crucially, the entropy-preserving encoder regularizer, $\mathcal{L}_{\text{reg\_encoder}}$, was implemented by applying Spectral Normalization to every convolutional layer of the encoder $f_{\theta}$, and its corresponding weight was set to $\eta=1.0$ as it is an architectural constraint rather than a loss term. For linear evaluation, we used SGD with batch sizes of 512, learning rate of 0.03, momentum of 0.9, and weight decay of 1e-5. Cosine annealing was also used for scheduling. The linear classifier was trained for 200 epochs, and we report the final epoch accuracy. All experiments were conducted on 8 $\times$ NVIDIA Tesla V100 GPUs (32GB), using PyTorch 1.13 and Python 3.8.

\begin{table*}[htbp]
\centering
\caption{Comparison of self-supervised learning methods on various datasets (left) and segmentation/detection performance on PASCAL VOC2012 (right).}
\label{tab:combined_table}
%\vspace{2mm}

\begin{minipage}{0.58\textwidth}
\label{res18}
\centering
\tiny
\caption*{(a) Comparison based on \textbf{ResNet-18} with batch size is 256.}
\begin{tabular}{lcccc}
\toprule
Method & CIFAR-10 & CIFAR-100 & STL-10 & ImageNet \\
\midrule
DeepCluster {\tiny{(ECCV'18)}}\cite{caron_deep_2018} & 84.3 & 50.1 & 79.1 & 41.1 \\
SimCLR {\tiny{(ICML'20)}}\cite{pmlr-v119-chen20j} & 91.1 & 65.3 & 90.1 & 52.4 \\
MoCo-v2 {\tiny{(CVPR'20)}}\cite{he_momentum_2020} & 91.3 & 68.3 & 88.9 & 52.5 \\
BYOL {\tiny{(NeurIPS'20)}}\cite{grill_bootstrap_2020} & \underline{91.9} & \underline{69.2} & 91.3 & 53.1 \\
SimSiam {\tiny{(ICCV'21)}}\cite{chen_exploring_2020} & 91.2 & 64.4 & 90.5 & 33.2 \\
W-MSE {\tiny{(ICML'21)}}\cite{ermolov_whitening_2021} & 90.6 & 64.5 & 87.7 & 47.2 \\
MoCo-v3 {\tiny{(ICCV'21)}}\cite{chen_empirical_2021} & 91.8 & 68.8 & 91.4 & 56.1 \\
S3OC {\tiny{(TNNLS'22)}}\cite{li_self-supervised_2022} & 91.0 & 65.2 & 91.4 & - \\
MinEnt {\tiny{(PR'23)}}\cite{li2023minent} & 90.8 & 66.1 & \underline{91.5} & - \\
Light-MoCo {\tiny{(ICME'23)}}\cite{lin_establishing_2023} & - & - & - & \underline{57.9} \\
\midrule
Ours & \textbf{92.1} & \textbf{69.5} & \textbf{91.9} & \textbf{59.4} \\
\bottomrule
\end{tabular}
\end{minipage}
\hfill
\begin{minipage}{0.335\textwidth}
\centering
\tiny
\caption*{(b) \small{Results on PASCAL VOC2012 with ResNet-50 SSL pretrained.}}
\vspace{-1mm}
\begin{tabular}{lcc}
\toprule
Pretrained & mIoU & mAP \\
\midrule
Supervised & 76.91 & 73.76 \\
Random & 38.35 & 40.90 \\
SimCLR & 76.74 & 73.17 \\
MoCo-v2 & 77.32 & 73.92 \\
SimSiam & 77.09 & 73.45 \\
\midrule
Ours & \textbf{78.12} \textcolor{red}{($\uparrow$ 1.21)} & \textbf{74.41} \textcolor{red}{($\uparrow$ 0.65)} \\
\bottomrule
\end{tabular}
\end{minipage}

\end{table*}

%\vspace{-2mm}
\subsection{Main Results}

\paragraph{Linear Evaluation}
We adopt the standard linear evaluation protocol~\cite{pmlr-v119-chen20j,grill_bootstrap_2020,he_masked_2021}, in which the pretrained anchor encoder is frozen, and a linear classifier is trained on top. The anchor encoder, with the backbone network parameters frozen, is used for the linear evaluation process. A linear layer is appended and trained using supervised signals while keeping the backbone fixed. Training data is augmented via random horizontal flipping, random cropping to 224 $\times$ 224, and layer normalization. For evaluation, input images are resized from 256 $\times$ 256 to 224 $\times$ 224.
Table~\ref{res50} reports Top-1 accuracy after IE-CL pretraining on ImageNet using ResNet-50 over 100, 200, 400, and 800 epochs. Table~\ref{res18} shows linear probe results on other datasets using ResNet-18 (trained for 300 epochs on ImageNet and 1,000 on smaller datasets). IE-CL consistently outperforms previous baselines across all settings.

\begin{figure}[htbp]
  \centering
  \begin{minipage}{0.5\textwidth}
    \centering
    \includegraphics[width=\textwidth]{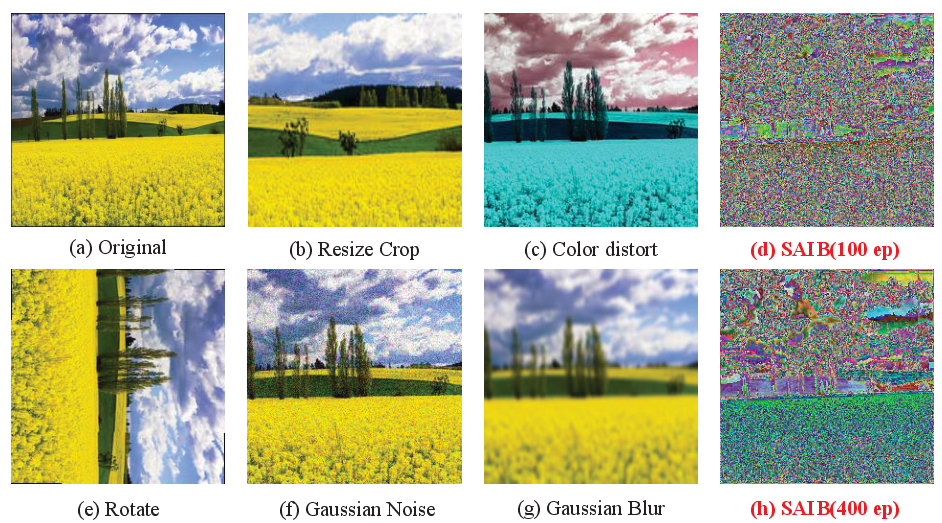}
    \caption{\small{Illustration of the data augmentation operators studied. The non-isometric transformation operator SAIB has learnable parameters, enabling non-prior augmentation for contrastive learning. Visualizing changes from 100 epochs (d) to 400 epochs (h) shows that KL divergence effectively constrains incremental entropy, preventing collapse.}}
    \label{fig5}
  \end{minipage}%
  \hfill
  \begin{minipage}{0.45\textwidth}
    \centering
    \includegraphics[width=\textwidth]{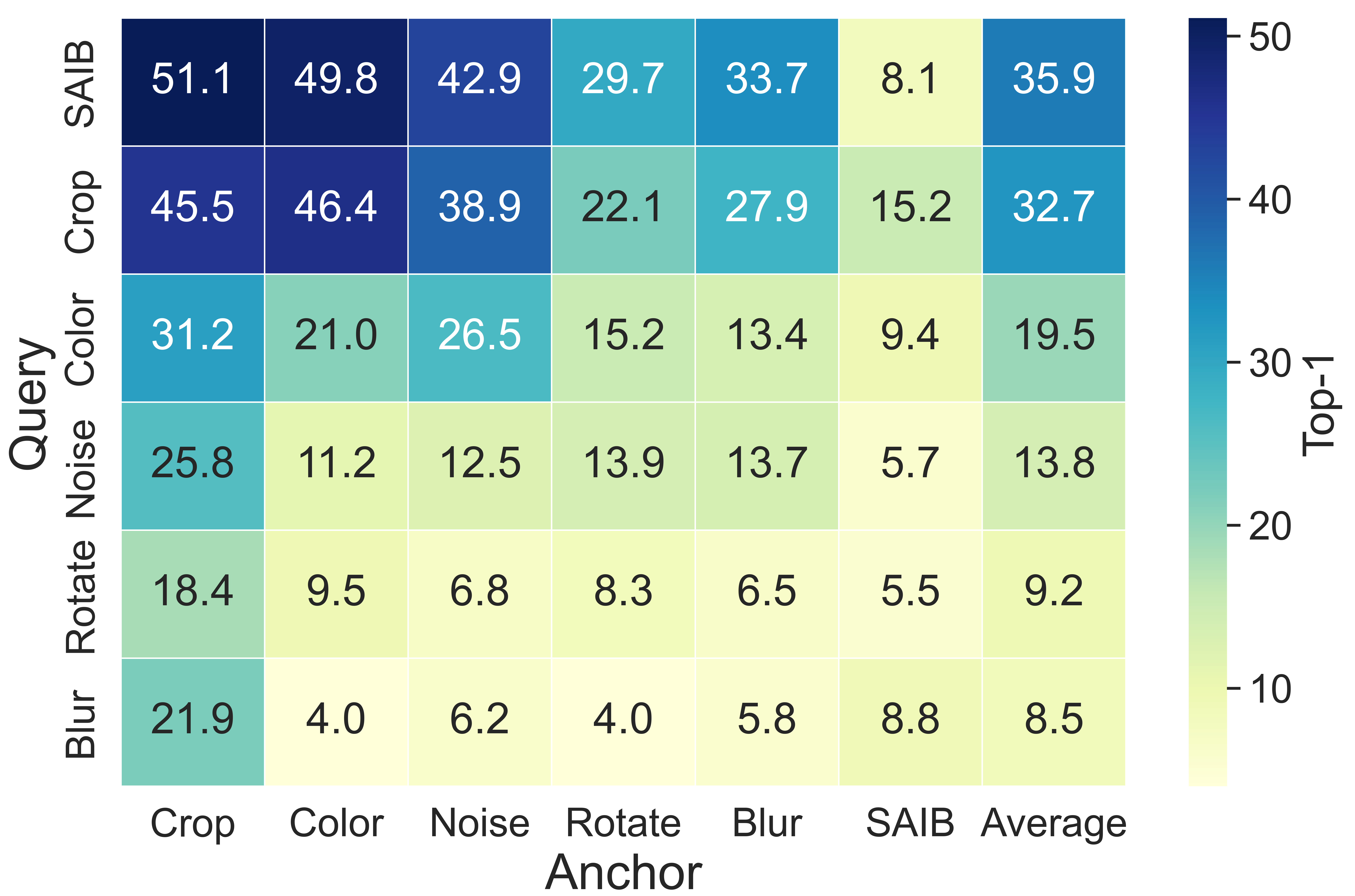}
    \caption{\small{Ablation tests the relationship between SAIB and the previous pretext task. The image was resized to 224$\times$224, and augmentation strength settings from \cite{pmlr-v119-chen20j} were applied, followed by two-by-two tests with SAIB placed on both sides of the contrastive learning.}}
    \label{figure6}
  \end{minipage}
\end{figure}
%\vspace{-10pt}
\paragraph{Transfer Learning}
To assess the transferability of the learned representations, we evaluate IE-CL on two downstream tasks from PASCAL VOC 2012~\cite{everingham2010pascal}: object detection and semantic segmentation. We use Faster R-CNN~\cite{ren2016faster} for object detection and DeepLab-v3~\cite{chen2018encoder} for segmentation, both with ResNet-50 backbones pretrained via IE-CL. For segmentation, training samples are augmented with random cropping and contrast-based enhancement. Adam is used with a learning rate of 3 $\times$ $10^{-4}$. For detection, the model is trained using SGD with a learning rate of 1 $\times$ $10^{-4}$.
%\vspace{-2mm}
\paragraph{\textbf{Augmentation Dependency}} As SAIB is implemented at the data loading stage (see Appendix Algorithm~\ref{algorithm1}), it can be interpreted as a learnable augmentation layer, contrasting with traditional pretext-based augmentation schemes used in prior contrastive methods. Figure~\ref{figure6} presents comparative results for various augmentation strategies on ImageNet-100, under 100 epochs of pretraining and linear evaluation. Our method exhibits robust performance gains under limited augmentation.
\begin{table}[h]
\centering
\caption{Ablation study of the IE-CL components on ImageNet-1k using MoCo-v2 with a ResNet-18 backbone. We incrementally add our proposed components: the SAIB module for entropy generation, KL regularization for semantic consistency, and an Encoder Regularizer (implemented via Spectral Normalization) for entropy preservation.}
\label{tab:ablation_study}
\small
\begin{tabular}{lcccc}
\toprule
\textbf{Configuration} & \textbf{SAIB} & \textbf{KL Reg.} & \textbf{Encoder Reg.} & \textbf{Top-1} \\
\midrule
MoCo-v2 (Baseline) & \ding{55} & \ding{55} & \ding{55} & 52.50 \\
+ Entropy Generation & \checkmark & \ding{55} & \ding{55} & 58.80 \\
+ Semantic Consistency & \checkmark & \checkmark & \ding{55} & 59.15 \\
\textbf{IE-CL (Full Framework)} & \checkmark & \checkmark & \checkmark & \textbf{59.41} \\
\bottomrule
\end{tabular}
\end{table}
\paragraph{Ablation Study}
To assess the contribution of each component in IE-CL, we performed an ablation study based on a MoCo-v2 baseline with a ResNet-18 backbone on ImageNet-1k. As shown in Table~\ref{tab:ablation_study}, progressively adding the core modules in Proposition~\ref{prop:ie-cl_framework} yields consistent gains. Introducing the SAIB module for \emph{entropy generation} produces the largest improvement, confirming the benefit of maximizing input-level incremental entropy, while the KL-divergence term for \emph{semantic consistency} further enhances performance by mitigating distributional drift. Crucially, the final row demonstrates that incorporating our proposed \emph{entropy preservation} mechanism via an encoder regularizer (`Encoder Reg.`) provides an additional performance boost on top of the already strong SAIB+KL configuration. This result provides strong empirical evidence for the central tenet of our framework: that optimal performance is achieved by jointly optimizing for both entropy generation at the input and entropy preservation through the encoder. We also found that cascading multiple SAIB modules offered diminishing returns, shown in Table~\ref{tab:saib_cascade}, thus we use a single module in our main configuration. 
%\vspace{-2mm}
\paragraph{Plug and Play}
Table~\ref{table4} demonstrates the plug-and-play ability of SAIB when integrated into other self-supervised learning frameworks on ImageNet-100, including non-contrastive methods such as BYOL and SimSiam. At this point, SAIB is placed on the \textit{Target} side, similar to its placement on the \textit{Anchor} side in contrastive learning. We further visualize entropy gains during training in Figure~\ref{loss} and~\ref{con}, showing accelerated convergence and performance improvement attributed to SAIB.

% =================================================================
% 这是修订后的 Ablation 说明文字
% =================================================================

% =================================================================
% 这是全新的 Table 3，为您的新理论提供实验支撑
% =================================================================

% =================================================================

 %\vspace{-2mm}
\begin{table}[ht]
  \centering
\vspace{-1pt}
\begin{minipage}{0.48\textwidth}
    \centering
    \caption{\small{Ablation on the number of cascaded SAIB modules within the full IE-CL framework. Performance slightly degrades with more than one module, indicating diminishing returns.}}
    \vspace{1mm} % 调整与标题的间距
    \label{tab:saib_cascade}
    \scriptsize % 使用 \small 而不是 \tiny，更易读
    \begin{tabular}{lc c}
    \toprule
    \textbf{Configuration} & \textbf{SAIB Cascade} & \textbf{Top-1} \\
    \midrule
    % 这一行是我们的新基准，即最终的IE-CL模型
    IE-CL (Full Framework) & 1x & \textbf{59.41} \\ 
    \midrule % 使用 midrule 突出显示基准
    IE-CL with more modules & 2x & 58.62 \\
    & 3x & 58.71 \\
    \bottomrule
    \end{tabular}
\end{minipage}
  \hfill
  \begin{minipage}{0.48\textwidth}
    \centering
    \caption{\small{Based on the theory of maximizing incremental information entropy with non-isometric transformations, SAIB can be seamlessly integrated to enhance other self-supervised paradigms.}}
     \vspace{-2mm}
    \label{table4}
    \tiny
    \begin{tabular}{l|l|l|l}
    \toprule
    Method & Top1  &  Batch Size & Epoch \\
    \midrule
      MoCo-v2&  66.29   &    256  &200 \\
     BYOL& 67.95     & 256  & 200 \\
       SimCLR&  63.34 &   256   &200  \\
        SimSiam&  66.25 &   256  &200 \\
        \midrule
        MoCo-v2 + SAIB&  67.54 \textcolor{red}{($\uparrow$ 1.25)} &256   &200 \\
            BYOL + SAIB&  68.76 \textcolor{red}{($\uparrow$ 0.81)} & 256  &200 \\
                SimCLR + SAIB&64.02 \textcolor{red}{($\uparrow$ 0.68)}   &256    &200 \\
                    SimSiam + SAIB&66.97 \textcolor{red}{($\uparrow$ 0.72)}    &   256 & 200\\
    \bottomrule
    \end{tabular}
  \end{minipage}%
\end{table}

\begin{figure}[htbp]
  \centering
  \begin{minipage}{0.47\textwidth}
    \centering
    \includegraphics[width=\textwidth]{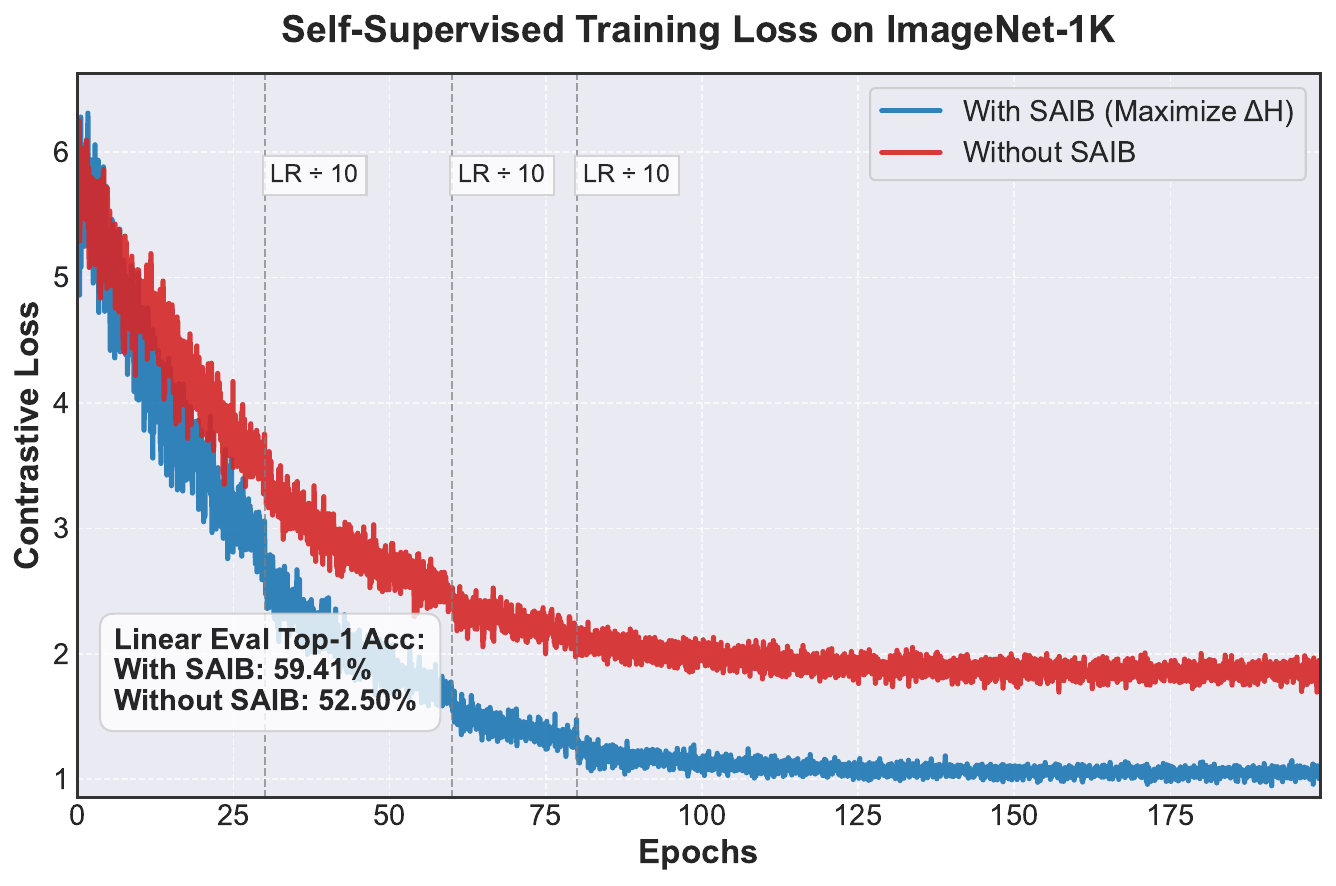}
    \caption{\small{Comparison of SSL training loss drop curves based on the proposed maximized incremental information entropy (SAIB) on ImageNet-1K, using MoCo-v2 as the baseline. }}%The curve with SAIB demonstrates a faster reduction in contrastive loss and higher final accuracy (59.41\% Top-1) compared to the baseline without SAIB (52.50\% Top-1).}}
    \label{loss}
  \end{minipage}%
  \hfill
  \begin{minipage}{0.47\textwidth}
    \centering
    \includegraphics[width=\textwidth]{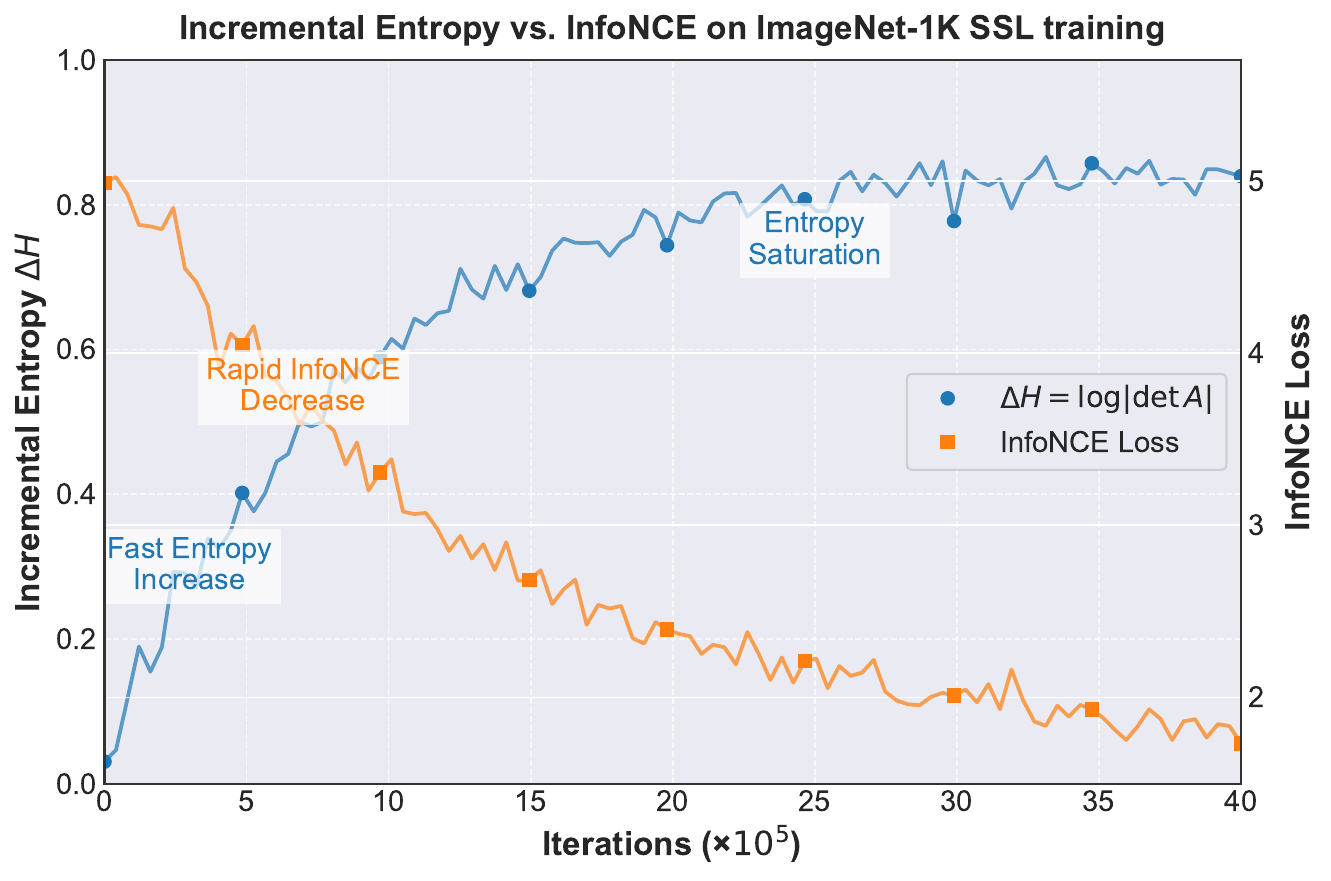}
    \caption{\small{The variation of the incremental entropy $\Delta H(X)$ on the Query side and InfoNCE throughout the iterations is shown. }}%As training progresses, the incremental information entropy on the Query increases through optimization of the non-isometric transformation with $|\textit{Jacobian}|$ > 1, thereby enhancing the contrastive learning with InfoNCE as the objective.}}
    \label{con}
  \end{minipage}
\end{figure}

%\vspace{-2mm}
\section{Discussion, Limitation, Conclusion and Future Work}
This work introduces \emph{Sample Incremental Information Entropy} and presents a new framework, IE-CL, to advance mutual information maximization in contrastive learning. It addresses the critical challenge of the encoder information bottleneck by jointly optimizing for \textit{entropy generation}, via a novel learnable transformation module (SAIB), and \textit{entropy preservation}, via an explicit encoder regularizer. Our approach yields consistent improvements across various datasets, though several aspects merit further study. SAIB operates at the patch level and induces local pixel-space variations, which preserve semantic consistency but may limit expressiveness in modeling complex structures or higher-resolution tasks. Its reliance on convolutional priors also raises challenges for extension to vision transformers or Vision Kolmogorov–Arnold Networks\cite{yang2026vision}. Accordingly, exploring broader cross-domain integration strategies and adapting them to diverse visual backbones\cite{yang2025medkan} may offer a promising direction for future research. Nonetheless, the core principle of IE-CL, explicitly modeling and maximizing sample entropy, provides a principled perspective for augmentation design and entropy-aware optimization, enriching representation diversity and deepening the information-theoretic understanding of self-supervised learning.
%\newpage
\section*{Acknowledgments}
This work was supported by the National Natural Science Foundation of China under Grant 62576216, Guangdong Provincial Key Laboratory under Grant 2023B1212060076, and also supported by the Intelligent Computing Center of Shenzhen University.
\section*{Ethics Statement}
This work does not involve human subjects, personally identifiable information, or sensitive medical data. All experiments are conducted on publicly available benchmark datasets (CIFAR-10/100, STL-10, ImageNet, and PASCAL VOC), which are widely adopted in the research community. We adhere strictly to the ICLR Code of Ethics and the licensing terms of the datasets used. Our proposed method, IE-CL, is intended for advancing self-supervised learning research in computer vision and does not present foreseeable risks of harmful misuse. We disclose all relevant implementation details, maintain academic integrity, and ensure that our research complies with ethical standards of reproducibility, transparency, and fairness.

\section*{Reproducibility Statement}
We have made extensive efforts to ensure the reproducibility of our results. A detailed description of the proposed method, IE-CL, including the theoretical derivations (Section 3), algorithm design (SAIB module), and the overall objective function, is provided in the main text. The experimental setup, datasets, and evaluation protocols are described in Section 4, with optimization hyperparameters and implementation details explicitly listed. Additional pseudo-code and derivations are included in the Appendix. We will release the source code, training scripts, and configuration files after the paper is accepted, as supplementary materials to enable full reproducibility. Random seeds and hardware specifications are also reported to facilitate consistent replication of our experiments.

\bibliography{example_paper.bib}
%\bibliographystyle{IEEEtran}
%\bibliography{iclr2026_conference}
\bibliographystyle{iclr2026_conference}

\clearpage
\section*{Supplementary Materials}

%%%%%%%%%%%%%%%%%%%%%%%%%%%%%%%%%%%%%%%%%%%%%%%%%%%%%%%%%%%%%%%%%%%%%%%%%%%%
%  Appendix X : A Comprehensive Proof of the Δ–H → H(Z+|Z) → I → InfoNCE Chain
%  (for Lemma~\ref{lem:deltaH-chain} in the main manuscript)
%%%%%%%%%%%%%%%%%%%%%%%%%%%%%%%%%%%%%%%%%%%%%%%%%%%%%%%%%%%%%%%%%%%%%%%%%%%%
\appendix
\section{The Use of Large Language Models (LLMs)}
In preparing this manuscript, we employed large language models (LLMs) solely for language polishing and grammar refinement. The LLMs were not involved in idea generation, theoretical development, algorithm design, experimental implementation, or result analysis. All technical content, experiments, and conclusions presented in this work are entirely the contribution of the authors.

\section{Theoretical Justification for the IE-CL Framework}
\label{app:theoretical_justification}

This appendix provides a detailed theoretical argument for our proposed Incremental Entropy Contrastive Learning (IE-CL) framework. We first establish why maximizing the entropy of negative sample representations, $H(Z'^{-})$, is a desirable objective within the InfoNCE framework. We then use the Data-Processing Inequality to formally demonstrate why naively maximizing input-level entropy is insufficient due to the information bottleneck of deep encoders. Finally, we show how our full IE-CL objective function provides a principled and complete solution to this challenge.

\subsection{The Goal: Maximizing Negative Entropy for Better Contrastive Learning}
\label{app:goal}

The standard InfoNCE loss for a positive pair $(z, z^+)$ and a set of $N-1$ negative samples $\{z_k^-\}_{k=1}^{N-1}$ drawn from a distribution $q(z^-)$ is:
\begin{equation}
    \mathcal{L}_{\text{InfoNCE}} = -\mathbb{E} \left[ \log \frac{\exp(s(z, z^+)/\tau)}{\exp(s(z, z^+)/\tau) + (N-1)\mathbb{E}_{z^-\sim q}[\exp(s(z, z^-)/\tau)]} \right]
\end{equation}
Our core premise is that increasing the entropy of the negative distribution, $H(Z'^{-})$, where $q$ is the distribution of $Z'^{-}$, makes the contrastive task more challenging and thus compels the model to learn better representations. Let's formalize this.

The denominator of the InfoNCE loss can be seen as a partition function. A higher entropy $H(Z'^{-})$ implies that the negative samples $z^-$ are more diverse and spread out in the representation space. This increased diversity makes it statistically more likely for some negative samples to be close to the anchor $z$, thus increasing the expected value of the negative scores, $\mathbb{E}_{z^-\sim q}[\exp(s(z, z^-)/\tau)]$. 

This directly increases the value of the denominator, which in turn increases the InfoNCE loss. To compensate for this more difficult learning signal (i.e., to minimize the loss), the optimizer is forced to adapt the encoder parameters $(\theta_1, \theta_2)$ to create a sharper separation. This is primarily achieved by increasing the similarity of the positive pair, $s(z, z^+) \uparrow$.

An increased positive pair similarity implies that given an anchor $z$, its positive counterpart $z^+$ becomes more predictable. In information-theoretic terms, this corresponds to a reduction in the conditional entropy, $H(Z^+|Z) \downarrow$. According to the definition of mutual information, $I(Z;Z^+) = H(Z^+) - H(Z^+|Z)$, a decrease in conditional entropy (while the marginal entropy $H(Z^+)$ is kept non-trivial to prevent collapse) leads to an increase in the mutual information, $I(Z;Z^+) \uparrow$. This is the ultimate goal of InfoNCE-based contrastive learning.

Thus, we have established the following desirable causal relationship:
\begin{equation}
    \max H(Z'^{-}) \implies \min H(Z^+|Z) \implies \max I(Z;Z^+) \iff \min \mathcal{L}_{\text{InfoNCE}}
\end{equation}
This confirms that maximizing the entropy of negative representations is a valid and principled objective for improving contrastive representation learning.

\subsection{The Challenge: The Information Bottleneck in Deep Encoders}
\label{app:challenge}

Having established our goal, the naive strategy would be to simply maximize the entropy at the input of the encoder, $H(X'^{-})$, using our SAIB module, $g_{\phi}$. However, this approach is fundamentally flawed because it ignores the transformative effect of the deep encoder, $f_{\theta}$.

The Data-Processing Inequality for differential entropy provides a formal tool to analyze this. Let $X'^{-} = g_{\phi}(X^{-})$ be the transformed input. The entropy of the final representation, $Z'^{-} = f_{\theta}(X'^{-})$, is bounded by the entropy of its input $H(X'^{-})$:
\begin{equation}
    H(Z'^{-}) = H(f_{\theta}(X'^{-})) \le H(X'^{-}) + \mathbb{E}_{p(x')}[\log |\det J_{f_{\theta}}(x')|]
    \label{eq:dpi_bound}
\end{equation}
where $J_{f_{\theta}}(x')$ is the Jacobian of the encoder function $f_{\theta}$ evaluated at $x'$.

This inequality reveals the core challenge. While our SAIB module is designed to maximize $H(X'^{-})$, the second term, $\mathbb{E}[\log |\det J_{f_{\theta}}|]$, which depends entirely on the encoder, can be a large negative value. This occurs if the encoder acts as a severe \textbf{information bottleneck}, aggressively compressing or collapsing its input space. In such a scenario, the entropy gained at the input level via SAIB would be nullified by the entropy lost during the encoding process.

Therefore, we conclude that maximizing the input-level incremental entropy $\Delta H(X^{-})$ (and thus $H(X'^{-})$) is a \textbf{necessary but not sufficient} condition. To robustly increase the final representation entropy $H(Z'^{-})$, a mechanism to control the encoder's information-compressing behavior is essential.

\subsection{The IE-CL Solution: A Synergistic Optimization Framework}
\label{app:solution}

Our IE-CL framework provides a complete solution by reformulating the objective to jointly optimize both entropy generation and preservation. We re-state our final loss function from the main text:
\begin{equation}
\mathcal{L}_{\text{final}} = \mathcal{L}_{\text{InfoNCE}} + \beta D_{\mathrm{KL}}(p_{\phi}||q) - \lambda H(Z'^{-}) + \eta \mathcal{L}_{\text{reg\_encoder}}
\end{equation}
Let's analyze how this objective creates an optimization landscape that solves the challenge described in Sec.~\ref{app:challenge}. The goal of the optimizer is to minimize $\mathcal{L}_{\text{final}}$, which is dominated by the term $- \lambda H(Z'^{-})$, effectively becoming an objective to maximize $H(Z'^{-})$. To achieve this, the optimizer can adjust the parameters of SAIB ($\phi$) and the encoder ($\theta$).

\begin{enumerate}
    \item \textbf{Optimizing SAIB ($\phi$)}: To maximize the final entropy $H(Z'^{-})$, the optimizer is incentivized to maximize the input entropy $H(X'^{-})$, as established by the bound in Eq.~\ref{eq:dpi_bound}. The SAIB module, $g_{\phi}$, is specifically designed for this task. As shown in the appendix, its design as a volume-expanding map ($|\det J_{g_{\phi}}| > 1$) directly translates to maximizing the incremental entropy $\Delta H(X^{-})$. This is the \textbf{entropy generation} part of our framework.
    
    \item \textbf{Optimizing the Encoder ($\theta$)}: The term $\eta \mathcal{L}_{\text{reg\_encoder}}$ directly constrains the encoder. By implementing this regularizer via \textbf{Spectral Normalization}, we constrain the Lipschitz constant of the encoder's layers. A smaller Lipschitz constant leads to a "smoother" transformation, which in turn prevents the Jacobian determinant term $\mathbb{E}[\log |\det J_{f_{\theta}}|]$ from becoming excessively negative. This term directly counteracts the information bottleneck, serving as the \textbf{entropy preservation} part of our framework.
    
    \item \textbf{Semantic Constraint ($D_{KL}$)}: The KL-divergence term acts as a crucial regularizer on SAIB, ensuring that the entropy maximization process does not push the transformed samples $X'^{-}$ into a semantically meaningless or out-of-distribution space.
\end{enumerate}

In conclusion, the IE-CL objective function does not assume a naive carry-over of entropy. Instead, it creates a synergistic system where the only effective way for the optimizer to maximize the final representation entropy $H(Z'^{-})$ is to \textbf{simultaneously} use SAIB to generate rich input entropy and constrain the encoder to faithfully preserve it. This provides a principled and robust solution to learning diverse representations for contrastive learning.
% =================================================================

%------------------------------------------------------------------------
\section{Jacobian Determinant of the SAIB}
\subsection{Details of the SAIB}

\label{app:det-greater-one}

\paragraph{Block definition.}
Let $x\in\mathbb R^{D}$ be the flattened patchified tensor.  
Within a fixed ReLU activation pattern the block acts linearly:
\begin{equation}
f(x)=x + A\,x,\qquad 
A:=W_{4}M_{3}W_{3}M_{2}W_{2}M_{1}W_{1},
\end{equation}
where
\begin{itemize}[leftmargin=1.6em]
  \item $W_{1}\!\in\!\mathbb R^{D\times D}$, $W_{3}\!\in\!\mathbb R^{2D\times2D}$,
        $W_{4}\!\in\!\mathbb R^{D\times2D}$ are $1\!\times\!1$-convs;
  \item $W_{2}\!\in\!\mathbb R^{2D\times D}$ is a $3\!\times\!3$-conv that
        \emph{doubles} the channel dimension;
  \item $M_{i}$ are diagonal $0/1$ masks coming from ReLU derivatives.
\end{itemize}

\paragraph{Step\,1: A lower bound on $\|A\|_{2}$.}
Because $W_{2}$ maps $\mathbb R^{D}\!\to\!\mathbb R^{2D}$ with i.i.d.\ Gaussian
initialisation of variance $2/\!{\rm fan}_{\rm in}$,  
random matrix theory gives
\begin{equation}
\Pr\!\bigl[\sigma_{\max}(W_{2}) \ge \sqrt{2}\bigr]=1 .
\end{equation}
All other $W_{i}$ are square and full rank by construction, so
$\|A\|_{2}\ge\sqrt{2}\;\|W_{4}M_{3}W_{3}M_{1}W_{1}\|_{2}\,>\!1$ almost surely.

\paragraph{Step\,2: Singular values of the Jacobian.}
The Jacobian of $f$ is
\begin{equation}
J \;=\; I + A .
\end{equation}
Let $u$ be the right singular vector of $A$ associated with 
$\sigma_{\max}(A)=:s>1$. Then
\begin{equation}
\|Ju\|_{2} \;=\; \|u + Au\|_{2}
             \;\ge\; \bigl\|Au\bigr\|_{2} - \|u\|_{2}
             \;=\; s - 1 \;>\;0,
\end{equation}
and by triangle inequality also $\|Ju\|_{2} \ge 1+s$.  
Hence the largest singular value of $J$ satisfies
$\sigma_{\max}(J) \ge 1+s > 2$.

\paragraph{Step\,3: Determinant strictly greater than $1$.}
Since $J$ is the sum of identity and a matrix of full column rank,
every singular value of $J$ is $\ge 1$ (see Weyl's monotonicity theorem).  
With at least one singular value $>2$ we get
\begin{equation}
|\det J|
= \prod_{k=1}^{D}\sigma_{k}(J)
\;\;>\;\; 2 \times 1^{D-1}
\;>\; 1 .
\end{equation}
Therefore the block is \emph{locally volume-expanding} almost everywhere,
and its differential entropy change
$\Delta H = \mathbb E[\log|\det J|] > 0$.

\paragraph{Remark.}
Even if some ReLU masks set entire channels to zero,
the $2\!\times$ expansion ensures that at least one
singular value of $A$ remains $>1$ with high probability, keeping the
argument intact.

\begin{figure}[htbp]
  \centering
  \includegraphics[width=0.95\textwidth]{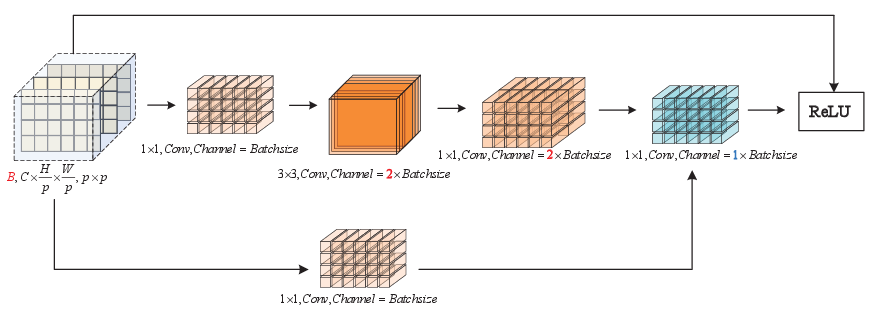}
 % \fbox{\rule[-.5cm]{0cm}{4cm} \rule[-.5cm]{4cm}{0cm}}
  \caption{Structure of Sample Augmentation Incremental Block (\textbf{SAIB}). Note that due to the patching of the 3-channel image, the batch occupies the position of the original channel. Therefore, it is possible to drive the inter-batch information to communicate by changing the original convolutional channels. All the convolutions in this block are cascaded with BN layers and Swish\cite{ramachandran2017searching} to achieve nonlinear augmentation capability. And then convolved and non-linearly processed, and finally reconstructed back to the original position through positional coding. }
\label{app:saib_detail}
\end{figure}
\section{Training Cost}
We show a comparison of the training time consumed for the proposed strategies in Figure \ref{m-cost} and Figure \ref{add-cost}, respectively. Figure \ref{m-cost} shows the different methods at 256 batch setting a with resnet50 as backbone on ImageNet-1k The time required to train one epoch. All parameters were kept at the optimal settings declared at the time of their release, and time spent was evaluated using mixed precision on 8$\times$V100 (32G). 

Figure \ref{add-cost} illustrates the additional computational time consumption associated with the SAIB plug-and-play existing approach. Due to the differences in the self-supervised paradigms, we observe that for the encoder half-update paradigm (MoCo-v2, SimSiam, BYOL), adding SAIB to maximise the incremental information entropy results in only a slight additional computation time (within 10\%), whereas for the full-parameter update approach that relies heavily on the batch scaling to function (SimCLR), adding SAIB increases the training time by 12.3\%. Overall, SAIB is able to balance the performance improvement of the model with the increased training time.
\begin{figure}[!h]
  \centering
  \includegraphics[width=0.7\textwidth]{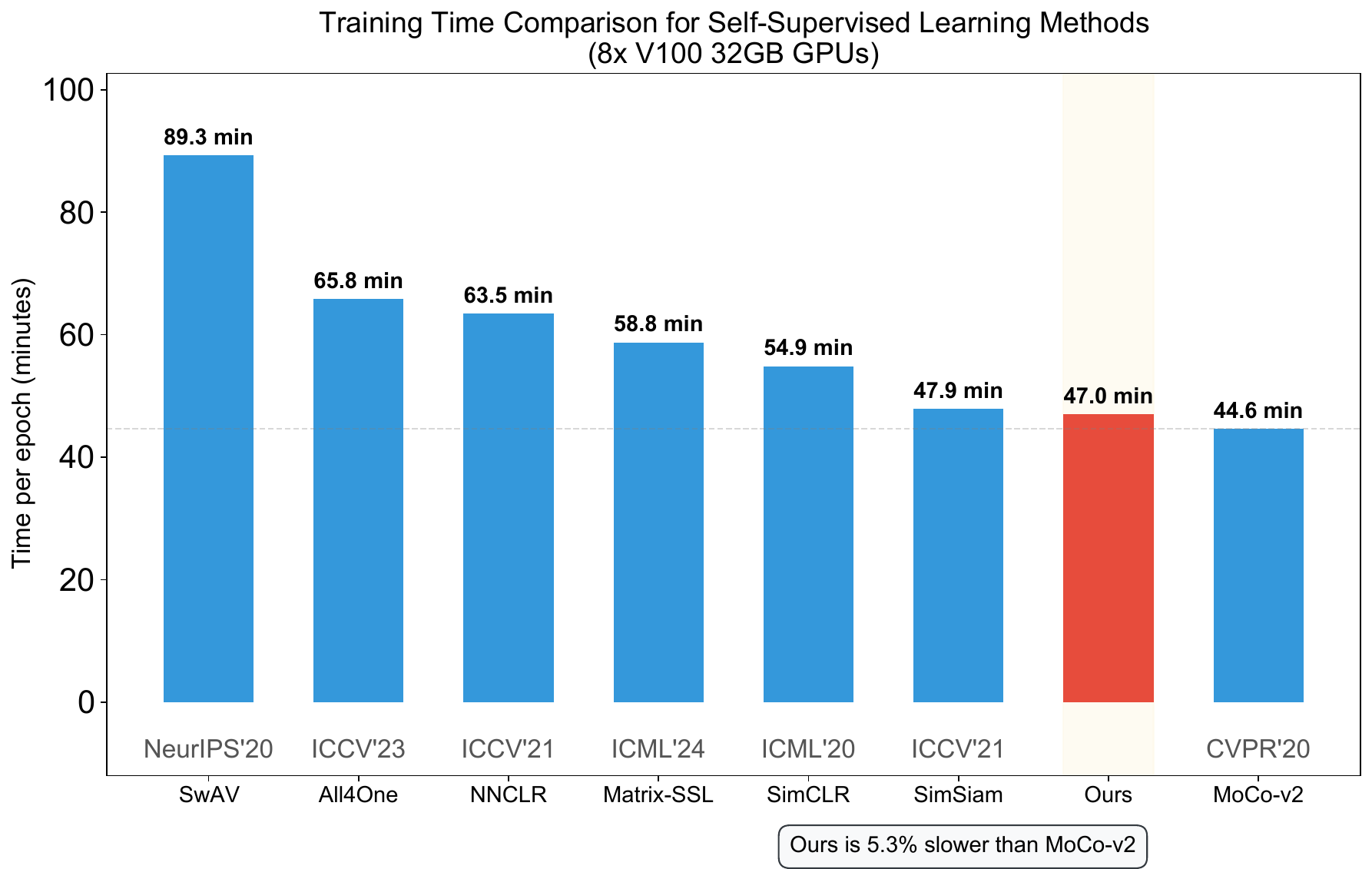}
 % \fbox{\rule[-.5cm]{0cm}{4cm} \rule[-.5cm]{4cm}{0cm}}
  \caption{Comparison of the time taken by different methods to train an epoch on ImageNet-1k with batch of 256. The proposed IE-CL, although it includes an additional non-isometric transform module SAIB, still spends less training cost compared to the previous methods because it uses momentum to update the Query encoder.}
\label{m-cost}
\end{figure}

\begin{figure}[htbp]
  \centering
  \includegraphics[width=0.7\textwidth]{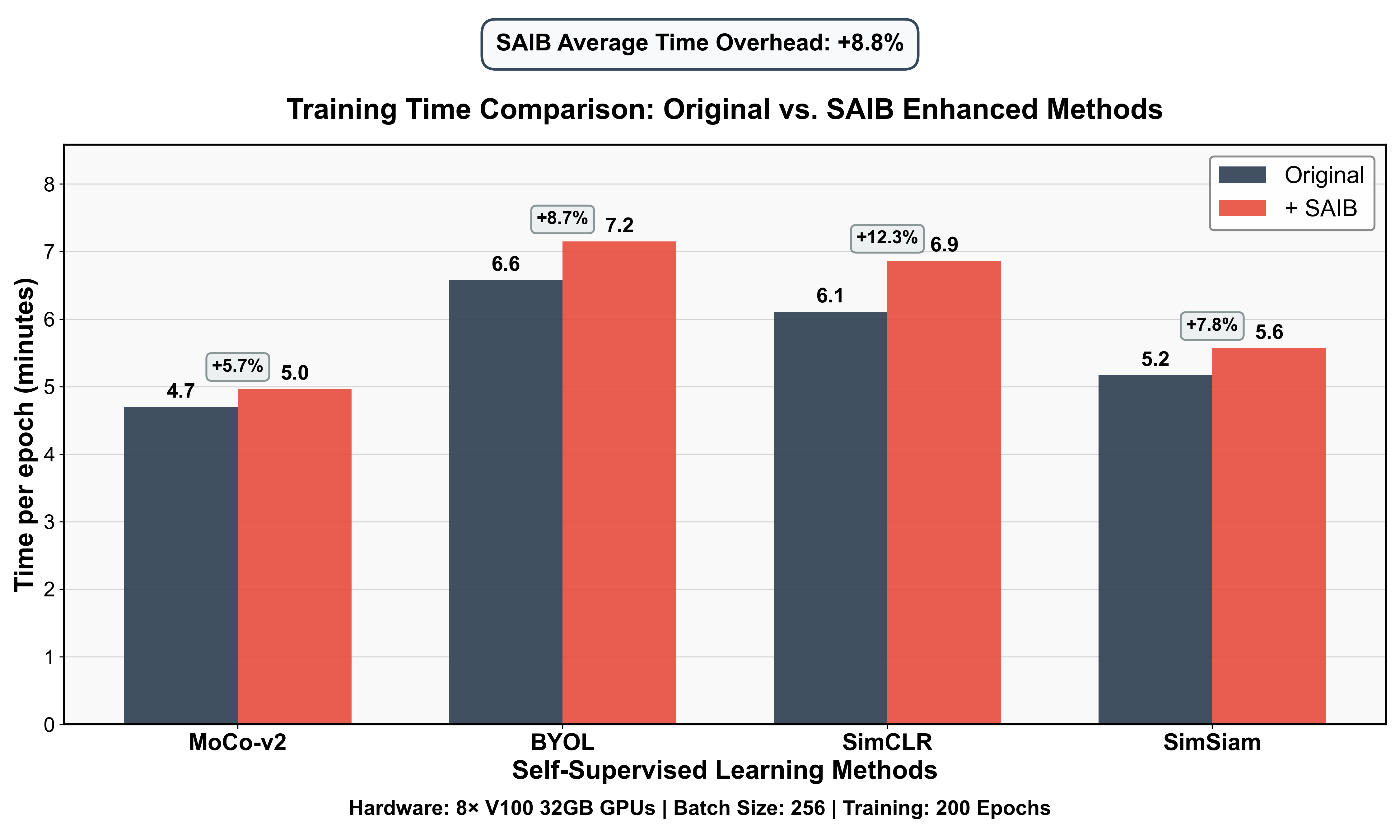}
 % \fbox{\rule[-.5cm]{0cm}{4cm} \rule[-.5cm]{4cm}{0cm}}
  \caption{As a plug-and-play module, SAIB enhances the performance of existing contrastive learning methods with limited additional computational overhead. Overall, it achieves effective performance gains within an acceptable increase in training time—on average, approximately 8.8\% more—compared to the original models (see Table \ref{table4} in the main text).}
\label{add-cost}
\end{figure}

\section{Pseudo Code}
%\begin{algorithm}[H]
%\small
%\SetAlgoLined
%\clearpage
%    \PyComment{Q: anchor encoder} \\
%    \PyComment{K: query encoder} \\
%    \PyComment{m: momentum hyperparameter } \\
%    \PyComment{ctr: cosine similarity } \\
%    \PyComment{SAIB: sample augmentation incremental block } \\
%    
%    \clearpage
%    \PyCode{for x in loader} \\
%    \Indp   % start indent
%        \PyCode{x1, x2 = aug(x)} \\
%        \PyCode{x2' = SAIB(x2)} \\
%        
%        \PyCode{q1, q2 = Q(x1), Q(x2')} \\
%        \PyCode{k1, k2 = K(x1), K(x2')} \\
%        
%        \PyComment{symmetrized InfoNCE loss} \\
%        \PyCode{$\mathcal{L}_\text{InfoNCE}$ = ctr(q1, k2) + ctr(q2, k1)} \\
%        
%        \PyComment{incremental entropy loss (KL divergence)} \\
%        \PyCode{$\mathcal{L}_\text{IE}$ = KL(x2', x2)}\\        
%        \PyComment{KL divergence loss between distributions} \\
%        \PyCode{$\mathcal{L}_\text{KL}$ = KL\_Loss(k2, q1)} \\
%        
%        \PyComment{total loss (IE-CL)} \\
%        \PyCode{$\mathcal{L}_\text{IECL}$ = $\mathcal{L}_\text{InfoNCE}$ + $\mathcal{L}_\text{IE}$ + $\mathcal{L}_\text{KL}$} \\
%        \clearpage
%        \PyCode{loss.backward()} \\
%        
%        \PyComment{update Q and SAIB} \\
%        \PyCode{update(Q, SAIB)} \\
%        
%        \PyComment{momentum update K} \\
%        \PyCode{K = m * K + (1 - m) * Q} \\
%    \Indm % end indent
%\caption{PyTorch pseudo-code of IE-CL}
%\label{algorithm1}
%\end{algorithm}
\begin{algorithm}[H]
\small
\SetAlgoLined
\PyComment{Q: anchor encoder (updated by backprop)} \\
\PyComment{K: query encoder (updated by momentum)} \\
\PyComment{m: momentum hyperparameter for K} \\
\PyComment{ctr: contrastive loss function (e.g., InfoNCE)} \\
\PyComment{SAIB: sample augmentation incremental block} \\
\PyComment{optimizer: updates Q and SAIB parameters} \\
\PyComment{H: entropy estimator} \\
\vspace{0.5em}
\PyComment{Initialize K's parameters from Q's} \\
\PyCode{K.load\_state\_dict(Q.state\_dict())} \\
\vspace{0.5em}
\PyCode{for x in loader:} \\
\Indp % start indent
    \PyComment{Create two augmented views} \\
    \PyCode{x\_anchor, x\_query = aug(x), aug(x)} \\
    \vspace{0.5em}
    \PyComment{Apply SAIB to the query view to increase entropy} \\
    \PyCode{x\_query\_transformed = SAIB(x\_query)} \\
    \vspace{0.5em}
    \PyComment{--- Forward Pass ---} \\
    \PyComment{Q computes features for anchor and transformed query} \\
    \PyCode{q\_anchor = Q(x\_anchor)} \\
    \PyCode{q\_query\_transformed = Q(x\_query\_transformed)} \\
    \vspace{0.5em}
    \PyComment{K computes features for transformed query (no gradients)} \\
    \PyCode{with torch.no\_grad():} \\
    \Indp
        \PyCode{k\_query = K(x\_query\_transformed)} \\
    \Indm
    \vspace{0.5em}
    \PyComment{--- Loss Calculation (matches Equation 25) ---} \\
    \PyComment{1. InfoNCE Loss} \\
    \PyCode{$\mathcal{L}_\text{InfoNCE}$ = ctr(q\_anchor, k\_query)} \\
    \vspace{0.2em}
    \PyComment{2. KL divergence for regularization} \\
    \PyCode{$\mathcal{L}_\text{KL}$ = KL\_Loss(q\_query\_transformed.detach(), q\_anchor)} \\
    \vspace{0.2em}
    \PyComment{3. Incremental Entropy Maximization} \\
    \PyCode{$\mathcal{L}_\text{entropy\_max}$ = -H(q\_query\_transformed)} \\
    \vspace{0.5em}
    \PyComment{Total loss} \\
    \PyCode{loss = $\mathcal{L}_\text{InfoNCE}$ + $\beta * \mathcal{L}_\text{KL}$ + $\lambda * \mathcal{L}_\text{entropy\_max}$} \\
    \vspace{0.5em}
    \PyComment{--- Backward Pass \& Optimizer Step ---} \\ % This is the corrected line
    \PyCode{loss.backward()} \\
    \PyCode{optimizer.step()} \\
    \PyCode{optimizer.zero\_grad()} \\
    \vspace{0.5em}
    \PyComment{--- Momentum Update K ---} \\
    \PyCode{with torch.no\_grad():} \\
    \Indp
        \PyCode{for param\_q, param\_k in zip(Q.parameters(), K.parameters()):} \\
        \Indp
            \PyCode{param\_k.data = param\_k.data * m + param\_q.data * (1.0 - m)} \\
        \Indm
    \Indm
\Indm % end indent
\caption{PyTorch pseudo-code of IE-CL}
\label{algorithm1}
\end{algorithm}

\clearpage
\appendix

\end{document}